\def\eqref#1{equation~\ref{#1}}
\def\1{\bm{1}}
\DeclareMathAlphabet{\mathsfit}{\encodingdefault}{\sfdefault}{m}{sl}
\SetMathAlphabet{\mathsfit}{bold}{\encodingdefault}{\sfdefault}{bx}{n}
\DeclareMathOperator*{\argmax}{arg\,max}
\DeclareMathOperator*{\argmin}{arg\,min}
\newtheorem{lemma}{Lemma}
\newtheorem{assumption}{Assumption}
\definecolor{deepred}{HTML}{940000}
\title{Combatting Dimensional Collapse in LLM Pre-Training Data via Diversified File Selection}
\author{%
  Ziqing Fan\textsuperscript{1,2,*}, Siyuan Du\textsuperscript{2,3,*}, Shengchao Hu\textsuperscript{1,2}, Pingjie Wang\textsuperscript{1,2}, 
  \textbf{Li Shen}\textsuperscript{4,\Letter}, \textbf{Ya Zhang}\textsuperscript{1,2},\\
  \textbf{Dacheng Tao\textsuperscript{5}}, \textbf{Yanfeng Wang\textsuperscript{1,2,\Letter}} \\
\textsuperscript{1}Shanghai Jiao Tong University, China; \textsuperscript{2}Shanghai AI Laboratory, China;  \textsuperscript{3}Fudan University, China \\ 
\textsuperscript{4}Shenzhen Campus of Sun Yat-sen University, China; \textsuperscript{5}Nanyang Technological University, Singapore \\
\small
 \texttt{zqfan\_knight@sjtu.edu.cn}, \texttt{dusiyuan@pjlab.org.cn}, 
   \texttt{charles-hu@sjtu.edu.cn}, \\
   \texttt{pingjiewang@sjtu.edu.cn},   \texttt{mathshenli@gmail.com},  \texttt{ya\_zhang@sjtu.edu.cn},  \\
   \texttt{dacheng.tao@ntu.edu.sg},   \texttt{wangyanfeng@sjtu.edu.cn} 
   }
\begin{document}

\maketitle
\def\thefootnote{*}\footnotetext{These authors contributed equally to this work.}\def\thefootnote{\arabic{footnote}}
\begin{abstract}
Selecting high-quality pre-training data for large language models (LLMs) is crucial for enhancing their overall performance under limited computation budget, improving both training and sample efficiency. 
Recent advancements in file selection primarily rely on using an existing or trained proxy model to assess the similarity of samples to a target domain, such as high quality sources BookCorpus and Wikipedia. 
However, upon revisiting these methods, the domain-similarity selection criteria demonstrates a diversity dilemma, i.e.~\textit{dimensional collapse} in the feature space, improving performance on the domain-related tasks but causing severe degradation on generic performance.
To prevent collapse and enhance diversity, we propose a \textbf{Di}ver\textbf{S}ified \textbf{F}ile selection algorithm~(\textbf{DiSF}), which selects the most decorrelated text files in the feature space.
We approach this with a classical greedy algorithm to achieve more uniform eigenvalues in the feature covariance matrix of the selected texts, analyzing its approximation to the optimal solution under a formulation of $\gamma$-weakly submodular optimization problem.
Empirically, we establish a benchmark and conduct extensive experiments on the TinyLlama architecture with models from 120M to 1.1B parameters.
Evaluating across nine tasks from the Harness framework, DiSF demonstrates a significant improvement on overall performance.
Specifically, DiSF \emph{saves 98.5\%} of 590M training files in SlimPajama, outperforming the full-data pre-training\footnote{Full-data pre-training in this paper refers to pre-training the LLMs on all training files in SlimPajama until the specified training budget is reached.} within a 50B training budget, and achieving about \textit{1.5x training efficiency} and \textit{5x data efficiency}. Source code
is available at:~\href{https://github.com/MediaBrain-SJTU/DiSF.git}{https://github.com/MediaBrain-SJTU/DiSF}. 
\end{abstract}

\section{Introduction}
Pre-trained Large Language Models (LLMs) have demonstrated remarkable capabilities~\citep{brown2020language,chowdhery2023palm,touvron2023llama}, but their training is computationally expensive, with costs increasing as model size and training data grow~\citep{cost1,carbon,cost2}. 
For instance, training GPT-3 with 175 billion parameters is estimated to produce 552 tons of $\text{CO}_2$ emissions and consume 1,287 MWh of energy~\citep{carbon}.
In practice of commercial use and common academic research, training budgets such as the number of pre-trained tokens are typically predefined, determined by available devices and training time constraints~\citep{computeoptimal}. 
To optimize the performance of LLMs within the budget, selecting high-quality pre-training data from large text corpora is essential, boosting both training and sample efficiency.

Recent innovations of selecting files for pre-training LLMs mostly rely on using an existing or trained proxy model and designing a proxy function to access the similarity to a target domain, which is regarded as high-quality data.
Heuristic classification~\citep{gpt3,palm} trains a binary classifier and select similar content to text domains like Books and Wikipedia~\citep{redpajama}. 
DSIR~\citep{dsir} also targets these domains, using a hashed n-gram extractor to measure similarity. 
QuRating~\citep{qrating} leverages GPT-3.5-turbo to train a judge model that evaluates the quality of domains like writing and education.
However, as shown in Figure~\ref{fig:performance}, these methods based on specific domains lead to a diversity dilemma, known as \textit{dimensional collapse} in representation learning~\citep{dc,dc1,dc2,cwd,fanfederated}. Their feature vectors of samples span a lower-dimensional subspace, indicating less diversity, improving performance in domain related tasks, such as reading comprehension (e.g., ARC~\citep{allenai:arc} and OBQA~\citep{OpenBookQA2018}), but causing severe degradation in overall performance across diverse domains, particularly in physical world knowledge tasks like PIQA~\citep{bisk2020piqa} and HellaSwag~\citep{zellers2019hellaswag}. 

\begin{figure*}[t]
\vspace{-10pt}
    \centering
    \subfigure[Heuristic]{
    \centering
    \label{fig:surface_1}
\includegraphics[width=0.185\textwidth]{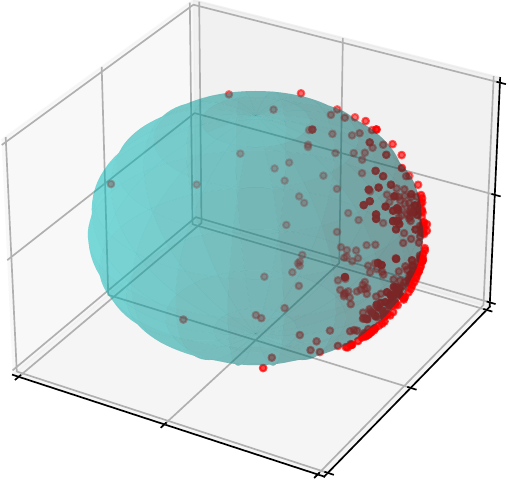}}
    \subfigure[DSIR]{
    \centering
    \label{fig:surface_2}    \includegraphics[width=0.185\textwidth]{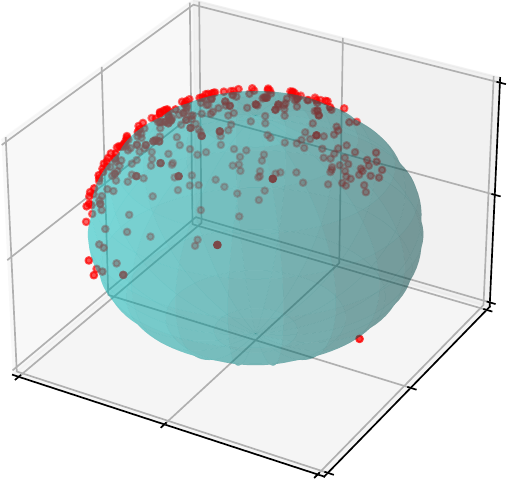}}
    \subfigure[QuRating-W]{
    \centering
    \label{fig:surface_3}
\includegraphics[width=0.185\textwidth]{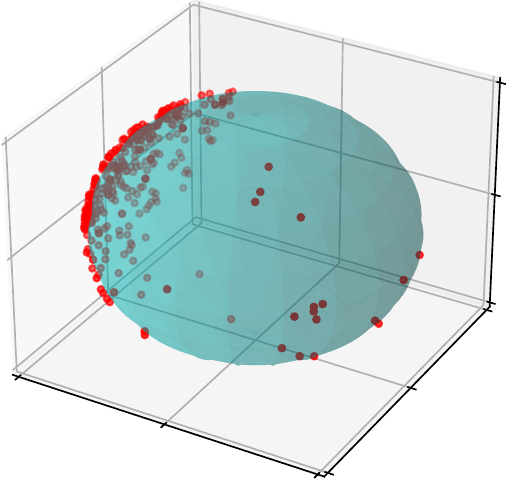}}
    \subfigure[D4]{
    \centering
    \label{fig:surface_5}    \includegraphics[width=0.185\textwidth]{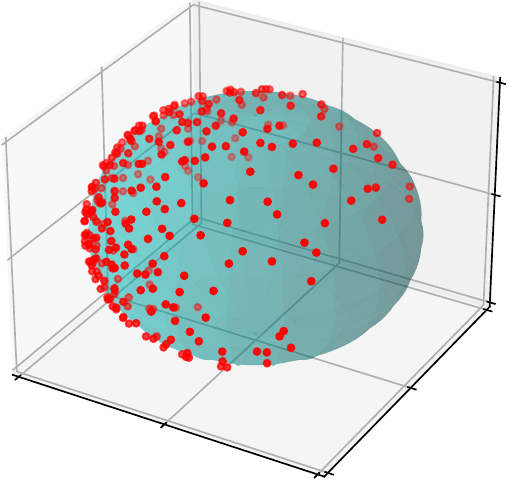}}
    \subfigure[DISF]{
    \centering
    \label{fig:surface_6}
\includegraphics[width=0.185\textwidth]{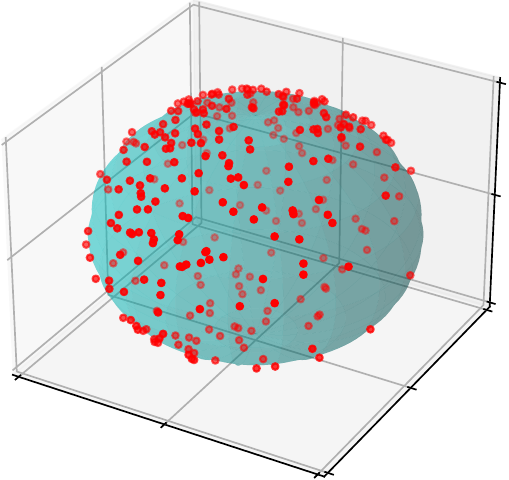}}
    \vspace{-10pt}
    \caption{The t-SNE~\citep{tsne} visualization of text features (normalized to the unit sphere) selected by different methods on SlimPajama. We use Contriever~\citep{contriever} to extract features. (a) and (b) show Heuristic classification and DSIR based on the Wikipedia and Book domains, while (c) depicts QuRating based on writing judgments. We visualize top 500 text features selected by their criterion, which forms a long narrow band, indicating dimensional collapse. (d) and (e) represent D4 and our DiSF. For D4, we display 500 random samples after reducing redundancy, while for DiSF, we select samples with the highest values based on \eqref{eq:proxy}. Both methods, especially DiSF, show more uniformly scattered features, indicating improved diversity.
    }\label{fig:feature}
    \vspace{-20pt}
\end{figure*}
\begin{wrapfigure}{r}{0.4\textwidth}
\vspace{-13pt}
\includegraphics[width=0.4\textwidth]
{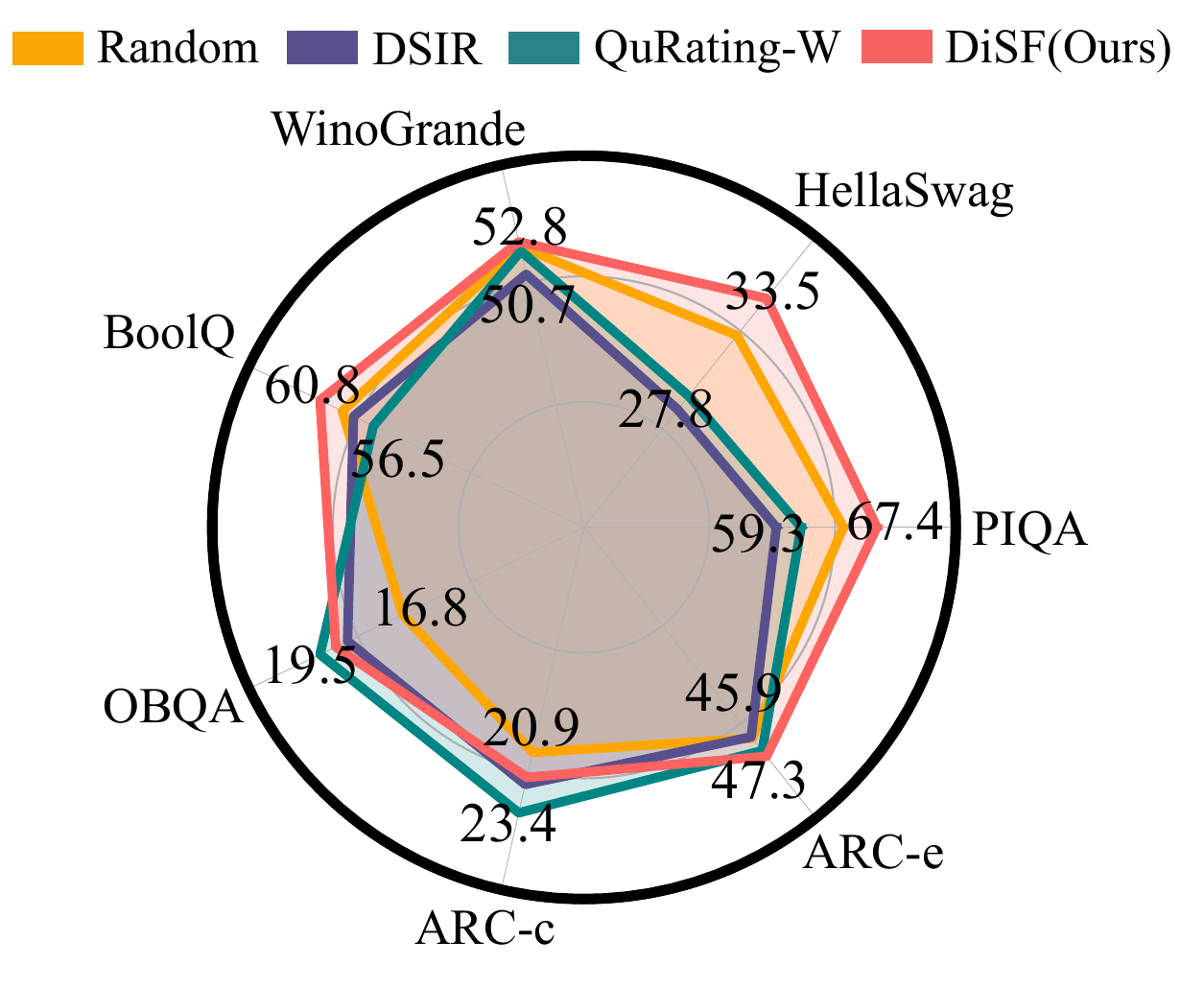} 
\vspace{-25pt}
\caption{Commonsense reasoning ability of pre-trained TinyLlama 1B using various selection methods evaluated on seven tasks of Harness. 
DSIR uses Wikipedia and BookCorpus as high quality source and QuRating-W selects samples with writing style score. 
All methods select 1.5\% of training files in SlimPajama and pre-train 50B tokens. 
}\label{fig:performance}
\vspace{-10pt}
\end{wrapfigure}
In our paper, we revisit these algorithms by visualizing the feature representations of their selected text samples as shown in Figure~\ref{fig:feature}, and performing eigenvalue analysis on the features' covariance matrix~(see Section~\ref{sec:rethink} for details).
As depicted in Figure~\ref{fig:feature} (a), (b), and (c), we observe dimensional collapse, where text samples selected based on a specific domain show dominant top eigenvalues, indicating long narrow feature spaces.
In contrast, text samples selected using our diversified method exhibit less dominant eigenvalues, leading to greater uniformity across feature dimensions (Figure~\ref{fig:feature} (e)).
Recent methods, such as D4~\citep{d4} and INGENIOUS~\citep{ingenious}, recognize the importance of diversity and select informative samples by leveraging feature distances and similarity kernels, respectively. 
However, they fall short in achieving the level of uniform representations attained by our approach as shown in Figure~\ref{fig:feature} (d) and (e) and Figure~\ref{fig:collapse}.

To prevent collapse and enhance diversity, we propose DiSF,  that selects files by minimizing the Frobenius norm of the features' covariance matrix, achieving more uniform eigenvalues.
We approach this with the classical greedy algorithm and analyze its approximation to the optimal solution under the formulation of $\gamma$-weakly submodular optimization problem~\citep{ratio}.
Empirically, we establish a benchmark on the newly released and popular TinyLlama~\citep{tinyllama} architecture with models of 120M, 560M, and 1.1B parameters.
Extensive experiments and ablation studies, conducted across nine tasks on the Harness framework, demonstrate the superior general performance of our method compared to baselines.
Specifically, out of 590M training files in SlimPajama~\citep{llama,redpajama}, our DiSF selects just 1.5\% (about 9B training tokens), outperforming full-data pre-training under 50B training budget, and achieving approximately 1.5x training efficiency and 5x data efficiency. 
In summary, our research makes three significant contributions to the field:
\begin{itemize}[leftmargin=15pt]
\vspace{-6pt}
    \item We rethink recent file selection innovations in pre-training LLMs, and identify a diversity dilemma known as dimensional collapse in feature representation learning, improving performance in domain-specific tasks but causing severe degradation in overall performance (Section \ref{sec:rethink}).
    \vspace{-3pt}
    \item To prevent collapse and enhance diversity, we propose DiSF, which selects the most decorrelated text files using a classical greedy algorithm, and analyze its approximation to the optimal solution under the formulation of $\gamma$-weakly submodular optimization problem~(Section \ref{sec:method}).
    \vspace{-3pt}
    \item We established a benchmark on TinyLlama architectures and SlimPajama text corpus with evaluation on nine tasks from Harness. Extensive experiments and ablations demonstrate the superior performance of our method, with improved training and sample efficiency~(Section \ref{sec:exp}).
\end{itemize}

\section{Rethinking File Selection for LLM Pre-training under Budget}\label{sec:rethink}
In this section, we first introduce the definition of file selection objective, and then revisit recent file selection innovations for pre-training LLMs, with a particular focus on the diversity dilemma.

\begin{table*}[t!]
\caption{Summary of recent innovations in file selection for LLM pre-training, categorized by their proxy model, proxy function to estimate sample importance, and whether requiring a target domain.}\label{tab:related}
\vspace{2pt}
\small
\centering
\setlength{\tabcolsep}{1pt}
\scalebox{0.88}{
\begin{tabular}{ c | c c c}
\toprule[2pt]  \textbf {Selection Method} & \textbf {Proxy Model ($\mathrm{M}$)}& \textbf {Proxy Function ($\mathrm{F}_{\mathrm{M}}$)}& \textbf {Domain Independent}\\ \midrule
\text { Heuristic Cls.} & Trained binary classifier& Probability to target domain& \textcolor{red}{\XSolidBrush}\\
\text { DSIR~(NeurIPS'23) } & Hashed n-gram extractor& Similarity to target distribution &\textcolor{red}{\XSolidBrush}\\
 \text { QuRating~(ICML'24)}& Trained judge model via GPT-3.5& Judgement score of target ability&\textcolor{red}{\XSolidBrush}\\
\text { INGENIOUS~(Emnlp'23) } &Pre-trained LLM with warming-up& Facility Location on slimilarity matrix& \textcolor[RGB]{34,139,34}{\Checkmark}\\
  \text { D4~(NeurIPS'23) } &Existing text feature extractor& Distance in feature space& \textcolor[RGB]{34,139,34}{\Checkmark}\\
\text { DiSF~(Ours) }&  Existing text feature extractor& Decorrelation of feature dimensions& \textcolor[RGB]{34,139,34}{\Checkmark}\\
\bottomrule[2pt]
\end{tabular}
}
\vspace{-9pt}
\end{table*}

\subsection{Problem Statement}\label{sec:define}
Given training and selection budgets $\mathcal{T}$ and $\mathcal{S}$, our objective is to select the most valuable samples $\mathbb{V}$ from a text corpus $\mathbb{D}=\{D_1,...,D_i,...,D_N\}$ collected from various domains~$D_i$~(e.g., Wikipedia or samples with high writing qualities) to optimize model weights $\mathrm{W}$ on pre-training task $\mathcal{L}$, thereby maximizing general performance $\mathcal{A}$. 
While $\mathcal{A}$ is challenging to verify on a given LLM, it can be inferred through diverse abilities such as commonsense and problem-solving, with evaluation tools like Harness~\citep{harness}. 
Mathematically, $\mathbb{V}$ can be obtained through selection objective as
\begin{equation} \label{eq:statement}
\begin{aligned}
   \argmax_{\mathrm{U}\subseteq \mathbb{D}} ~ & ~ \mathcal{A}(\argmin_{\mathrm{W}}\mathcal{L}(\mathrm{W}, \mathcal{T}, \mathrm{U})), \\
    \text{s.t.} ~~ & |\mathrm{U}|\leq\mathbb{\mathcal{S}}.
\end{aligned}
\end{equation}
There are various options to define $\mathcal{T}$, $\mathcal{S}$, $\mathcal{A}$, and $\mathcal{L}$. 
In this work, we define the training budget $\mathcal{T}$ as the number of pre-trained tokens, the selection budget $\mathcal{S}$ as the number of files, the pre-training objective $\mathcal{L}$ as next-word prediction on SlimPajama~\citep{redpajama,llama}, and the LLM's generic performance $\mathcal{A}$ as the evaluation across different tasks using the Harness framework.

\subsection{Recent Selection Methods}\label{sec:recent}
Analyzing the objective defined in \eqref{eq:statement}, the inner part minimizes the pre-training objective on LLM with a predefined training budget and the selected samples, while the upper level focuses on selecting the most valuable samples to maximize the LLM's generic performance within the selection budgets. 
Directly searching for valuable samples $\mathbb{V}$ in the full corpus $\mathbb{D}$ is extremely time-consuming and expensive. 
To reduce this cost, recent innovations in file selection for LLM pre-training mostly rely on using an existing or trained proxy
model $\mathrm{M}$ and designing a proxy function $\mathrm{F_{\mathrm{M}}}$ based on a target domain.
Through linking text samples $\mathrm{x}\in \mathbb{D}$ to the generic performance $\mathcal{A}$, they transform \eqref{eq:statement} into choosing the samples with the highest values of $\mathrm{F}_{\mathrm{M}}(\mathrm{x})$, as follows:
\begin{equation}\label{eq:recent}
\mathbb{V}=\text{Top}_{S} \mathrm{F}_{\mathrm{M}}(\mathrm{x}\in \mathbb{D}).
\end{equation}
As summarized in Table~\ref{tab:related}, typical Heuristic classification~\citep{gpt3,palm} trains a binary classifier to filter web data, selecting files with probabilities to a target format above a noisy, such as BookCorpus and Wikipedia~\citep{redpajama}. 
Similarly, DSIR \citep{dsir} improves it and also treats these two domains as high-quality domains to select files for general purpose, with a hashed n-gram feature extractor to measure the similarity between the text features and the target distribution.
QuRating~\citep{qrating} queries GPT-3.5-turbo and trains a judge model to assess the text samples' quality of a target style, such as writing and mathematics.
However, as shown in Figure~\ref{fig:performance}, selection methods based on specific domains lead to a diversity dilemma, known as \textit{dimensional collapse} in representation learning~\citep{dc,dc1,dc2,dc3}, improving performance in the domain related task but causing severe degradation in overall performance across diverse tasks.
The recent method D4~\citep{d4} and INGENIOUS~\citep{ingenious} reduce file redundancy by leveraging feature distances and selecting informative samples based on similarity kernels respectively, which improves diversity but can not effectively mitigate dimensional collapse as ours (see both Figure~\ref{fig:feature} and Figure~\ref{fig:collapse}).

\begin{wrapfigure}{r}{0.4\textwidth}
\vspace{-12pt}
\includegraphics[width=0.4\textwidth]
{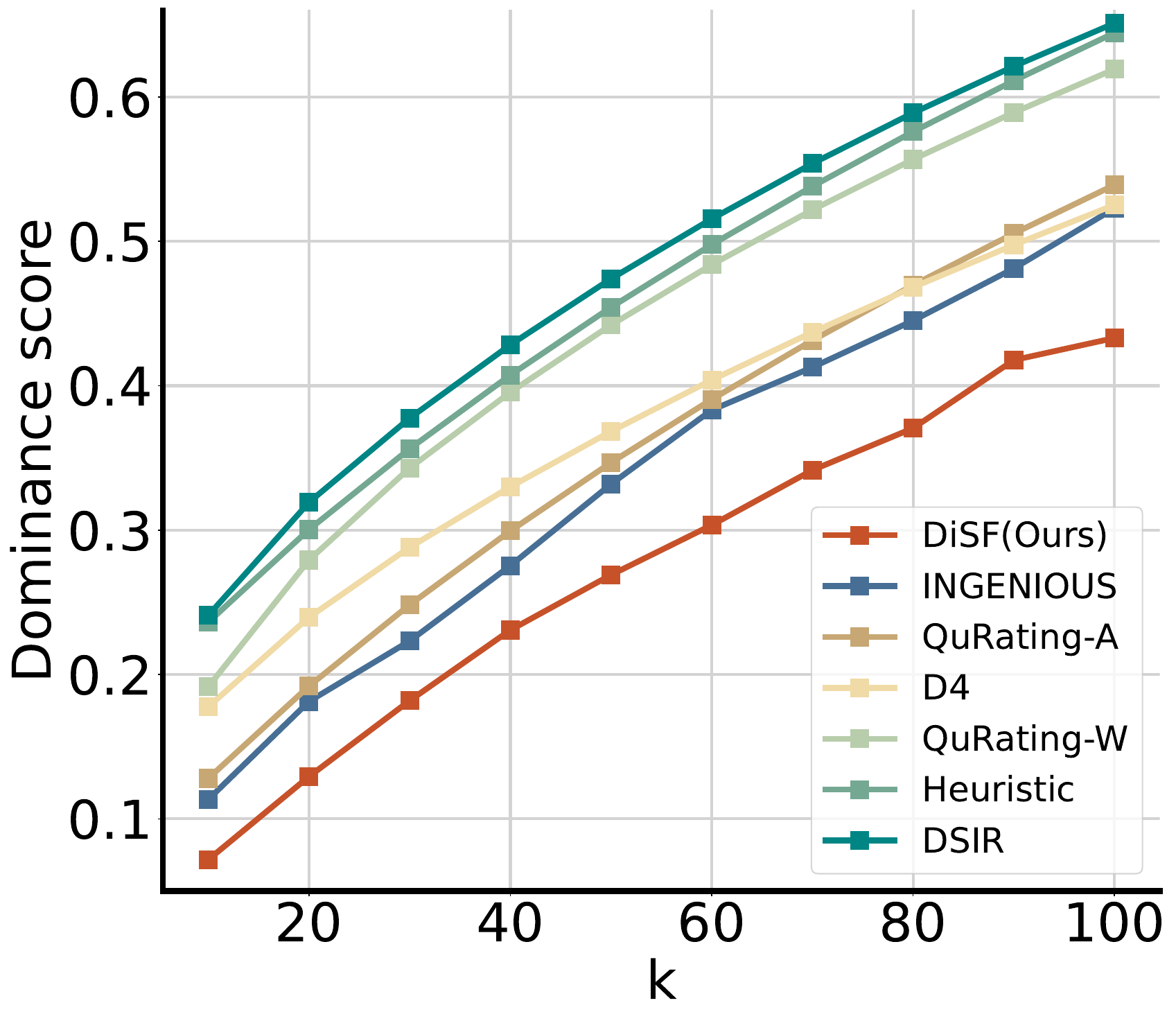} 
\vspace{-20pt}
\caption{The dominance score for recent methods calculated as $\frac{\sum_{i=1}^k\lambda_i}{\sum_{j=1}^d\lambda_j}$, where 
$\lambda_i$ represents the $i$-th largest eigenvalue of the feature covariance matrix, and $d$ is the dimension of the feature space. We use Contriever model to extract features. We select the top 500 text samples based on their respective selection criteria. For D4, we select 500 random samples after reducing redundancy.}\label{fig:collapse}
\vspace{-10pt}
\end{wrapfigure}

\subsection{Diversity Dilemma: Dimensional Collapse}
As shown in Figure~\ref{fig:feature}, dimensional collapse occurs in the embedding space when samples are selected based on the target domains.
Their embedding vectors extracted by the Contriever~\citep{contriever} span a lower-dimensional subspace, indicating less diversity. 
To quantify this, we conduct an eigenvalue analysis on the covariance matrix of selected text features under different selection methods, and visualize the dominance score of the $\text{top}_k$ eigenvalues calculated as $\frac{\sum_{i=1}^k\lambda_i}{\sum_{j=1}^d\lambda_j},$ where $\lambda_i$ is the $i$-th large eigenvalue of the covariance matrix, and $d$ is the dimension of the feature space.
Smaller dominance score suggests more uniform feature dimensions and richer information~\citep{dc3,dc1,cwd,dc,dc2}. 
As demonstrated in Figure~\ref{fig:collapse}, Heuristic classification, DSIR, and QuRating with a focus on writing style (denoted as QuRating-W) exhibit significantly higher dominance scores compared to D4, INGENIOUS, and QuRating with all styles (denoted as QuRating-A), highlighting the reduced diversity caused by target domains selection criteria.
To address this and enhance diversity, we propose a novel diversified file selection algorithm~(DiSF) to select text samples, spanning more uniform feature dimensions in the embedding space. 
As red line shown in Figure~\ref{fig:collapse}, compared to D4 and QuRating-A, our DiSF further reduces the dominance score, demonstrating its effectiveness in uniforming dimensions and mitigating dimensional collapse.

\section{DiSF: Diversified File Selection}\label{sec:method}
In this section, we define the diversified selection criterion of DiSF, and the selection procedures with a classical greedy algorithm in batch level, and then analyze the function from view under $\gamma$-weakly submodular optimization theories, followed with time complexity analysis.
\subsection{Method}
\textbf{Selection criterion.}~~~~Given a set of $n$ text samples $\mathrm{U}=\{\mathrm{x}_1,...,\mathrm{x}_i,...\mathrm{x}_n\}$, their text features with a standard normalization $\mathrm{Z}=\{\mathrm{z}_1,...,\mathrm{z}_i,...\mathrm{z}_n\}$ are obtained by a text feature extractor $\mathrm{M}$ as
\begin{equation}\label{eq:feature}
\mathrm{z}_i=\frac{f(\mathrm{x}_i,\mathrm{M})-\mathrm{\mu}}{\mathrm{\sigma}},
\end{equation}
where $f$ calculates the feature representations of text samples,and $\mu$ and $\sigma$ are the mean and variance of $\{f(\mathrm{x}_1,\mathrm{M}),...,f(\mathrm{x}_i,\mathrm{M}),...f(\mathrm{x}_n,\mathrm{M})\}$, respectively. 
Then, the covariance matrix $\mathrm{C}$ is defined as
\begin{equation}\label{covar}
\mathrm{C}(\mathrm{U},\mathrm{M})=\frac{1}{n-1}\sum_{i=1}^n\mathrm{z}_i^T\mathrm{z}_i.
\end{equation}
As discussed in Section~\ref{sec:rethink}, our goal is to prevent dimensional collapse by ensuring more uniform eigenvalues in the covariance matrix of the selected samples. 
Directly calculating and selecting based on eigenvalues are costly, but it is feasible to optimize the Frobenius norm of the covariance matrix $\|\mathrm{C}\|_F$~\citep{dc1,dc2,cwd}, as described in Lemma~\ref{lema:frobenius}:
\begin{lemma}
    \label{lema:frobenius}
    Assuming a covariance matrix $\mathrm{C}\in\mathbf{R}^{d\times d}$ computed from the features with the standard normalization, and its eigenvalues $\{\lambda_1, \lambda_2,...,\lambda_d\}$, we will have the following equality that satisfied
    \begin{equation}
    \sum_{i=1}^{d}{(\lambda_i-\frac{1}{d}\sum_{j=1}^{d}{\lambda_j})^2}=\|\mathrm{C}\|^2_F - d. \nonumber
    \end{equation}
\end{lemma}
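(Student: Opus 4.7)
The plan is to reduce the identity to a one-line consequence of two standard facts: (i) under the standard normalization defined in \eqref{eq:feature}, each coordinate of the features has empirical variance one, so every diagonal entry of $\mathrm{C}$ equals $1$; (ii) for a real symmetric (positive semidefinite) matrix, the Frobenius norm equals the $\ell^2$-norm of its eigenvalues. Once these are in place, the claimed identity is simply a quadratic expansion.

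Concretely, I would proceed in three short steps. First, I would unpack the normalization: because $\mathrm{z}_i$ is obtained by subtracting the coordinate-wise mean $\mu$ and dividing by the coordinate-wise standard deviation $\sigma$, the $k$-th diagonal entry of $\mathrm{C}(\mathrm{U},\mathrm{M})$ from \eqref{covar} equals the empirical variance of the $k$-th normalized coordinate, which is $1$. Hence $\mathrm{tr}(\mathrm{C})=d$, and since the trace equals the sum of eigenvalues, $\frac{1}{d}\sum_{j=1}^{d}\lambda_j = 1$. Second, I would rewrite the Frobenius norm in terms of the spectrum: $\mathrm{C}$ is real symmetric, so by the spectral theorem it admits an orthogonal diagonalization, and $\|\mathrm{C}\|_F^2 = \mathrm{tr}(\mathrm{C}^\top \mathrm{C}) = \mathrm{tr}(\mathrm{C}^2) = \sum_{i=1}^{d}\lambda_i^2$.

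Third, I would expand the left-hand side using the mean value $1$ from step one:
\begin{equation*}
\sum_{i=1}^{d}\Bigl(\lambda_i - \tfrac{1}{d}\sum_{j=1}^{d}\lambda_j\Bigr)^2 = \sum_{i=1}^{d}(\lambda_i - 1)^2 = \sum_{i=1}^{d}\lambda_i^2 - 2\sum_{i=1}^{d}\lambda_i + d,
\end{equation*}
and then substitute $\sum_i \lambda_i = d$ and $\sum_i \lambda_i^2 = \|\mathrm{C}\|_F^2$ from the previous steps to collapse this to $\|\mathrm{C}\|_F^2 - d$.

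I do not anticipate a genuine obstacle here; the only subtlety worth being careful about is step one, where it is important that the normalization in \eqref{eq:feature} is coordinate-wise (so that $\sigma$ is a vector of per-dimension standard deviations, making each diagonal of $\mathrm{C}$ exactly $1$) and that $\mathrm{C}$ in \eqref{covar} is formed with the same normalization constant $\frac{1}{n-1}$ used to define the empirical variance. Provided these conventions are consistent, the identity follows from pure linear algebra with no inequalities or approximations involved.
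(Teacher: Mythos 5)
Your proposal is correct and follows essentially the same route as the paper's proof: both use the fact that standard normalization forces $\mathrm{tr}(\mathrm{C})=\sum_i\lambda_i=d$, invoke the spectral theorem to get $\|\mathrm{C}\|_F^2=\sum_i\lambda_i^2$, and then expand the squared deviations around the mean eigenvalue $1$. Your extra remark about the normalization being coordinate-wise and the $\frac{1}{n-1}$ convention matching between \eqref{eq:feature} and \eqref{covar} is a careful observation that the paper leaves implicit, but the argument is the same.
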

We provide a detailed proof in Appendix~\ref{app:lemma1} for clarity. 
From Lemma~\ref{lema:frobenius}, it is evident that ensuring the uniformity of the eigenvalues of the covariance matrix can be translated into minimizing the Frobenius norm of the covariance matrix. 
Thus, given any text set $\mathrm{U}\subseteq\mathbb{D}$, we evaluate its importance as $-\|\mathrm{C}(\mathrm{U},\mathrm{M})\|_F$, and define the selection objective as 
\begin{equation}\label{eq:original}
\argmax_{\mathrm{U}}-\|\mathrm{C}(\mathrm{U},\mathrm{M})\|_F, ~~\textit{s.t.}~~|\mathrm{U}|\leq \mathcal{S},
\end{equation}
where $\mathcal{S}$ is the predefined selection budget. 
Differing from typical selection algorithms that directly define the importance of individual text samples through proxy function shown in \eqref{eq:statement}, our selection objective defined in \eqref{eq:original} is calculated on a subset of samples. 

\begin{wrapfigure}{r}{0.45\textwidth}
\begin{minipage}{.45\textwidth}
\small
\vspace{-20pt}
\begin{algorithm}[H]
    {
    \small
    \caption{Selection procedure of DiSF}
    \label{alg:DiSF}
    \small
    \textbf{Input}:$(\mathbb{D}, b, \mathcal{S}, \mathrm{M})$
    \begin{algorithmic}
    \STATE $\mathbb{V} \leftarrow \emptyset$.
    \STATE Divide $\mathbb{D}$ into batches of $b_i$ with scale $b$.
    \FOR{$i = 1,\dots,\lfloor{\frac{|\mathbb{D}|}{b}}\rfloor$}
        \STATE randomly select $\mathrm{x}^* \in b_i$ and $\mathrm{U}_i\leftarrow \{\mathrm{x}^*\}$.
        \WHILE{$|\mathrm{U}_i|\leq \frac{b|\mathcal{S}|}{|\mathcal{D}|}$}
            \STATE $b_i\leftarrow b_i\setminus \{\mathrm{x}^*\}$.
            \STATE $\mathrm{x}^*=\argmax_{\mathrm{x} \in b_i} \mathrm{F}_{\mathrm{M}}^{\text{DiSF}}(\mathrm{U}_i\cup \{\mathrm{x}\})$.
            \STATE $\mathrm{U}_i\leftarrow \mathrm{U}_i\cup \{\mathrm{x}^*\}$.
        \ENDWHILE
        \STATE $\mathbb{V}=\mathbb{V}\cup \mathrm{U}_i$.
    \ENDFOR
    \end{algorithmic}
{\textbf{Output}:$\mathbb{V}$}
}
\end{algorithm} 
\end{minipage}
\vspace{5pt}
\end{wrapfigure}
\textbf{Selection Procedure.}~~~~To satisfy the non-negative requirement in the later analysis, we first reformulate our proxy function into a non-negative form as
\begin{equation}\label{eq:proxy}
\mathrm{F}_{\mathrm{M}}^{\text{DiSF}}(\mathrm{U})=e^{-\|\mathrm{C}(\mathrm{U},\mathrm{M})\|_F},
\end{equation}
where $\mathrm{U}\subseteq \mathbb{D}$. 
We apply the classical greedy algorithm to select the most valuable samples based on our proxy function. 
This allows us to iteratively choose the most valuable samples as follows:
\begin{equation}\label{eq:select}
\mathrm{U}\leftarrow\mathrm{U} \cup \{\argmax_{\mathrm{x} \in \mathbb{D}\setminus \mathrm{U}} \mathrm{F}_{\mathrm{M}}^{\text{DiSF}}(\mathrm{U}\cup \{\mathrm{x}\})\}.
\end{equation}
However, directly applying the greedy algorithm as shown in \eqref{eq:select} to the entire text corpus is computationally expensive, we perform the selection at the batch scale.
In Section~\ref{sec:ablation}, we provide an ablation study of selection scale.
Given a text corpus $\mathbb{D}$, selection scale $b$, selection budget $S$, and proxy model $\mathrm{M}$, the selection process is outlined in Algorithm~\ref{alg:DiSF}.
We first divide the text corpus into $\lfloor\frac{|D|}{b}\rfloor$ batches, where $\lfloor\cdot\rfloor$ is the round down command. 
In each batch $b_i$, we initialize $\mathrm{U}_i$ with a randomly selected sample and remove it from $b_i$.
Then, we iteratively apply $(\lfloor\frac{b|\mathcal{S}|}{|\mathbb{D}|}\rfloor-1)$ times the greedy algorithm, removing the most valuable sample from $b_i$ and adding it to $\mathrm{U}_i$ based on \eqref{eq:select}.
Finally, the selected samples are the combination of all $\mathrm{U}_i$.

\subsection{Analysis}
\textbf{Selection Analysis.}~~~~Diversity is often effectively modeled by submodular functions~\citep{submodular,function,fedsub,diversity1,qixing1,qixing2,qixing3}, which exhibit a property of diminishing returns, i.e., the marginal gain an element brings to a subset decreases as more elements are added to that subset. 
Mathematically, given a set $\Omega$, a set function $\mathrm{f}: 2^N \rightarrow \mathbb{R}$ is $\gamma$-weakly submodular if and only if, for any subsets $A \subseteq B \subseteq \Omega$, and a element $\mathrm{x}\in \Omega\setminus B$, the following inequality holds:
\begin{equation}
\mathrm{f}(A \cup\{\mathrm{x}\})-\mathrm{f}(A) \geq \gamma(\mathrm{f}(B \cup\{\mathrm{x}\})-\mathrm{f}(B)),
\end{equation}
where $\gamma\in$ (0,1]. 

\begin{wrapfigure}{r}{0.4\textwidth}
\includegraphics[width=0.4\textwidth]
{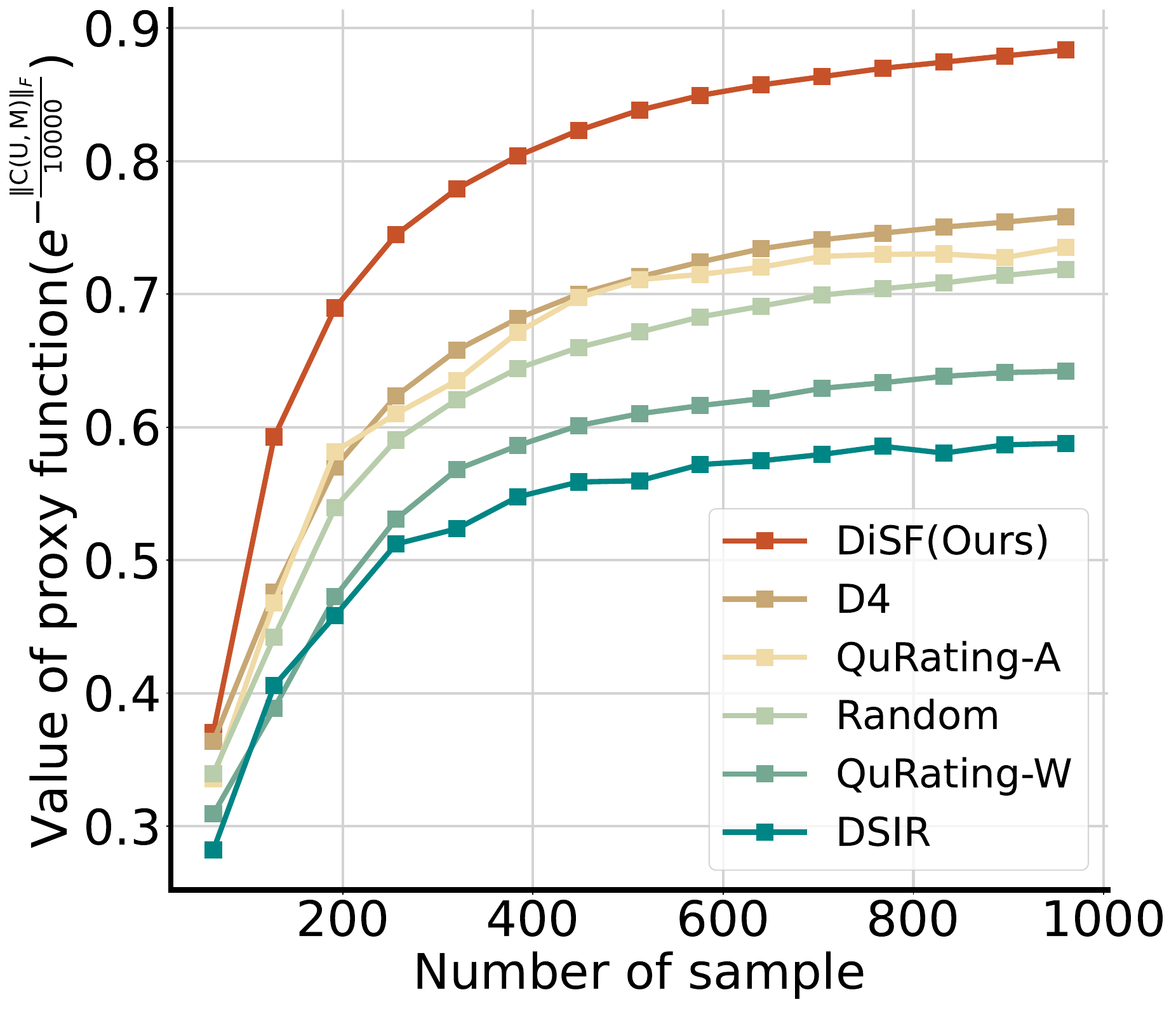} 
\vspace{-20pt}
\caption{
Proxy value, as defined in \eqref{eq:proxy}, calculated on different methods. For each method, we randomly choose samples from their selected text files with 1.5\% selection budget. All cases generally demonstrate the property of monotonicity. Moreover, our selection method achieves significantly larger proxy values, indicating much better uniformity of feature dimensions.}\label{fig:monotone}
\vspace{-5pt}
\end{wrapfigure}
A non-negative monotone $\gamma$-weakly submodular maximization problem can be solved using the classical greedy algorithm, which guarantees a $(1-e^{-\gamma})$-approximation to the optimal solution (function value on selected samples compared to optimal value)~\citep{ratio}. 
As demonstrated in Figure~\ref{fig:monotone}, we empirically verify the monotonicity of our proxy function by increasing the number of samples randomly chosen from the selected text files of different methods. 
Additionally, the results in Figure~\ref{fig:monotone} indicate a significantly higher proxy value for our selection method compared to others, demonstrating the superior ability to mitigate dimensional collapse.
Since our proxy function also aims to capture the diversity within a given set, we try to provide some insights into how our algorithm approaches the optimal solution under this formulation.
To provide a theoretical guarantee of the selection, we aim to establish a lower bound for the submodular ratio under any given set where the gain is positive, as most empirical verifications in Figure~\ref{fig:monotone} suggest. 
Since the original formulation is challenging to analyze directly, we introduce two bounds on the function value.
One is the upper bound $\epsilon$ on gain difference, as demonstrated in Assumption~\ref{asm:difference}, and the other one is the upper bound $\mu$ on the average utility of function value as shown in Assumption~\ref{asm:max}.
Therefore, we prove a lower bound of the submodular ratio of $e^{-2\mu}\frac{e^{2\mu-\epsilon}-1}{e^{2\mu}-1}$, which means given assumptions ~\ref{asm:difference}, ~\ref{asm:max}, and positive gain, our function is at least $e^{-2\mu}\frac{e^{2\mu-\epsilon}-1}{e^{2\mu}-1}$-weakly submodular function. The detailed proof is provided in the Appendix~\ref{app:subanalysis}

\textbf{Time Complexity Analysis.}~~~~Given the feature dimension $d$, the time complexity of our DiSF is less than $O(|\mathcal{S}|^2bd^2)$, and we provide a detailed calculation of the results.
All terms in the time complexity are at most quadratic and independent of the overall dataset size, which we consider acceptable for applying to file selection for LLM pre-training.
Additionally, in Figure~\ref{fig:scale} of Section~\ref{sec:ablation}, we report the exact time required to select data on the benchmark dataset. In Section~\ref{sec:efficiency} and Appendix~\ref{app:efficiency}, we compare the sampling and training efficiency of our method against the baselines.

\section{Experiment}\label{sec:exp}This part introduces the experimental setup, including dataset, evaluations, model architecture, baselines, and training details in Section~\ref{sec:setup}, the main results and efficiency analysis in Section~\ref{sec:results} and Section~\ref{sec:efficiency}, and extensive ablation studies for better understanding of our DiSF in Section~\ref{sec:ablation}.
\renewcommand\arraystretch{0.9}
\begin{table}[ht!]
\setlength{\tabcolsep}{4pt}
\centering
\vspace{-10pt}
\caption{
Performance on seven tasks and their average with the Harness framework. In the upper part, we pre-train TinyLlama with 120M, 560M, and 1.1B parameters from scratch, with training budget of 10B tokens and selection budget of 1.5\% of total training files. In the lower part, the training budget is increased to 50B. Results impacted by the diversity dilemma, best individual task results, and the best average performance are respectively highlighted in bold blue, black, and red.}
\vspace{2pt}
\label{tb:main}
\small
\centering
\scalebox{0.93}{
\begin{tabular}{c|l|cccccccc}
\toprule[2pt]
\multirow{2}{*}[-4pt]{Model Size} & Method 
 & \multicolumn{8}{c}{ Pre-training TinyLlama from scratch with 10B training budget on 1.5\% selected files}  \\
\cmidrule{2-10} & \#Metric& ARC-e & ARC-c & OBQA & BoolQ & WinoGrande & HellaSwag & PIQA & Avg.  \\
\midrule 
\multirow{8}{*}[0pt]{120M} 
& Random & 37.1& 18.2& 12.8& 61.1& 49.9& 27.0& 58.7& 37.8  \\
 & DSIR & 36.6& 18.2& 13.8& \color{blue}\textbf{57.6}& 51.7& 25.1& \color{blue}\textbf{54.8}& 36.9  \\
  & D4 &38.0&18.1&13.6&60.1&51.8&\textbf{27.8}&57.8&38.2 \\
  & QuRating-W &37.7&\textbf{18.6}&\textbf{15.0}&60.9&51.3&\color{blue}\textbf{24.3}&\color{blue}\textbf{55.6}& 37.7\\
  & QuRating-A &39.1&18.3&14.2&60.7&50.8&27.4&58.4& 38.4\\
  & Doremi & 36.8& 18.2& 13.8& 60.9& \textbf{52.7}& 27.3& 58.8& 38.5  \\
  &{INGENIOUS} & 39.5& 18.6& 13.4& 60.2& 50.4& 27.4& 58.7& 38.3 \\
 & DiSF (Ours) & \textbf{39.9}& 17.8 & 13.8& \textbf{61.7}& 51.9& 27.6& \textbf{59.5}& \color{red}\textbf{38.9}   \\
\midrule 
\multirow{8}{*}{560M} 
&Random& 43.2& 19.9& 15.2& 59.3& 52.9& 30.4& 62.0& 40.4  \\
 & DSIR & 41.8 & 19.3& 16.8& 60.5& 50.3& \color{blue}\textbf{26.0}& \color{blue}\textbf{56.0}& 38.7  \\
& D4 & 46.5& 19.5& 16.0& 60.5& \textbf{53.2}& 29.5& 61.4& 40.9  \\
  & QuRating-W &42.0&\textbf{21.3}& \textbf{17.4}&59.5&\color{blue}\textbf{49.8}&28.4&\color{blue}\textbf{58.2}& 39.5\\
  & QuRating-A&46.7&19.1&17.2&\textbf{61.3}&51.1&28.9&63.4& 41.1\\
   &{INGENIOUS} & 45.8& 20.6& 16.0& 58.3& 51.4& 30.8& 62.5& 40.8 \\
 & Doremi & 42.7&19.3&15.8&60.9&50.4& 30.0& 63.7& 40.4  \\
& DiSF (Ours) & \textbf{47.5}&21.2&16.2&58.9&51.0& \textbf{31.1}& \textbf{64.2}& \color{red}\textbf{41.4}  \\
\midrule 
\multirow{8}{*}{1.1B} 
& Random & 44.8 &19.0 & 16.4 & 59.9& 51.3& 30.8& 64.1 & 40.9  \\
 & DSIR& 45.7& 20.3& 18.6 & 59.8& 50.4& \color{blue}\textbf{27.6}& \color{blue}\textbf{58.3}& 40.1  \\
  & D4 & 46.2& 19.3& 18.8& 60.2& 51.3& 30.9& 65.4& 41.7  \\
  & QuRating-W &44.4&\textbf{21.4}&17.0 & 59.6&51.4&31.0&\color{blue}\textbf{60.1}& 40.7\\
  & QuRating-A &47.4&20.9&\textbf{19.8}&59.1&50.2&30.1&63.3& 41.6\\
   &{INGENIOUS} & 45.3& 19.6& 19.8& 60.0& 51.2& 31.2& 64.9& 41.7 \\
 & Doremi & 44.7&19.7&17.6&\textbf{61.0}&51.2& 31.1& 64.6& 41.4  \\
 & DiSF (Ours) &\textbf{47.7}&19.5&18.2&59.7&\textbf{52.2}& \textbf{32.3}& \textbf{65.6}& \color{red}\textbf{42.2}  \\
\midrule
\multirow{2}{*}[-4pt]{Model Size} & Method 
 & \multicolumn{8}{c}{ Pre-training TinyLlama from scratch with 50B training budget on 1.5\% selected files}  \\
\cmidrule{2-10} & \#Metric& ARC-e & ARC-c & OBQA & BoolQ & WinoGrande & HellaSwag & PIQA & Avg.  \\
\midrule 
\multirow{8}{*}[0pt]{120M} 
&Random& 41.5& 17.6& 16.6& 56.3& 52.2& 28.4& 59.5& 38.9  \\
 & DSIR& 44.2& 18.4& 17.2& \color{blue}\textbf{53.9}& \color{blue}\textbf{49.6}& \color{blue}\textbf{25.2}& \color{blue}\textbf{56.3}&37.8  \\
  & D4 &42.1&19.2&17.0&58.3&52.6&28.1&60.9& 39.7  \\
  & QuRating-W &41.6&18.9&16.2&59.5&51.6&27.6&\color{blue}\textbf{56.8}& 38.9\\
  & QuRating-A &\textbf{46.8}&\textbf{19.7}&15.8&60.5&51.1&27.9&58.4& 40.0\\
  &{INGENIOUS} & 42.2& 19.1& 16.3& 58.7& 51.2& 28.5& 61.0& 39.6 \\
  & Doremi& 40.7&18.9&16.2&60.2&52.7& 28.1& \textbf{61.4}& 39.7  \\
 & DiSF (Ours) & 44.3&18.3&\textbf{17.8}&\textbf{61.1}&\textbf{53.3}& \textbf{28.7}& 61.0& \color{red}\textbf{40.6}  \\
\midrule 
\multirow{8}{*}{560M} 
& Random & 46.0& 20.9& 16.8& 59.0& 52.5& 31.6& 64.7& 41.6  \\
 & DSIR & 45.9& 22.2& 18.5& 58.1& 50.7& \color{blue}\textbf{27.8}& \color{blue}\textbf{59.3}& 40.4  \\
  & D4 & 47.2& 21.7& 18.2& 58.7& 52.2& 32.4& 65.3& 42.2  \\
  & QuRating-W & 46.9& \textbf{23.4}& \textbf{19.5}& \color{blue}\textbf{56.5}& 52.2& \color{blue}\textbf{28.5}& \color{blue}\textbf{61.3}& 41.2\\
  & QuRating-A & \textbf{48.3}& 19.2& 18.2& 58.9& 52.1& \textbf{34.1}& \textbf{67.4}& 42.6\\
  &{INGENIOUS} & 46.8& 21.6& 17.0& 59.3& 52.2& 32.8& 66.5& 42.3 \\
 & Doremi & 47.2& 22.4& 18.6& 59.0& 52.6& 33.1& 66.2& 42.7  \\
 & DiSF (Ours) & 47.3& 22.0& 18.8& \textbf{60.8}& \textbf{52.8}& 33.5& \textbf{67.4}& \color{red}\textbf{43.2}  \\
   \midrule 
\multirow{8}{*}{1.1B} 
& Random & 51.4& 20.9& 18.2& 56.2& 51.3& 34.7& 67.3& 42.9 \\
& DSIR & 50.2& 20.6& 20.0& 54.6& 52.4& \color{blue}\textbf{30.4}& \color{blue}\textbf{64.2}& 41.8 \\
& D4 & 53.6& 22.1& 20.9&57.3& 52.9& 35.2& 67.7& 44.2  \\
  & QuRating-W & \textbf{53.9}& 23.1& 20.8& 55.0& 52.7& 35.0& \color{blue}\textbf{63.3}& 43.4\\
  & QuRating-A & 53.6& \textbf{23.7}& 21.2& 60.1& 51.6& 36.5& 67.0& 44.8\\
  &{INGENIOUS} & 51.7& 21.6& 22.2& 56.9& 51.7& 36.8& 67.9& 44.1 \\
  & Doremi & 51.2& 21.9& \textbf{21.8}& 58.1& \textbf{54.3}& 36.6& 67.8& 44.5  \\
  & DiSF (Ours) & 51.8& 22.7& 20.0& \textbf{62.0}& 53.5& \textbf{37.3}& \textbf{69.0}& \color{red}\textbf{45.2}  \\
\bottomrule[2pt]
\end{tabular}
}
\vspace{-10pt}
\end{table}

\subsection{Setup}\label{sec:setup}\textbf{Dataset and evaluation.}~~~~Following many prior works~\citep{llama,tinyllama,qrating,doremi}, we employ \textbf{SlimPajama}~\citep{llama,redpajama} as the text corpus, which is specifically curated for pre-training LLMs. 
All selections are performed on about 590M training files of SlimPajama, processed with Llama tokenizer~\citep{llama}. 
Notably, QuRating~\citep{qrating} provides judgments on various properties of text samples in SlimPajama using a judge model trained based on GPT-3.5-turbo, which can be directly utilized in our implement.
To capture the diversity dilemma and evaluate generic performance of pre-trained LLMs, we use seven commonsense reasoning tasks from the popular framework \textbf{Harness}~\citep{harness}, including four reading comprehension tasks~(\textbf{ARC-e}, \textbf{ARC-c}~\citep{allenai:arc}, \textbf{OBQA}~\citep{OpenBookQA2018}, and \textbf{BoolQ}~\citep{clark2019boolq}), and three physical world knowledge tasks~(\textbf{PIQA}~\citep{bisk2020piqa}, \textbf{HellaSwag}~\citep{zellers2019hellaswag}, and \textbf{WinoGrande}~\citep{sakaguchi2021winogrande}).
Besides, we also employ another two tasks \textbf{MMLU}~\citep{MMLU}, and \textbf{BBH}~\citep{bbh} from Harness to evaluate the problem-solving capabilities for further clarify of our effectiveness.
See Section~\ref{sec:related} for detailed introduction of the dataset and tasks.
\vspace{3pt}

\textbf{Model architecture.}~~~~As for the model, we adopt \textbf{Tinyllama} architecture~\citep{tinyllama} with 120M, 560M, and 1.1B parameters. 
Thanks to FlashAttention~\citep{flash} and Lit-GPT~\citep{litgpt}, all experiments can be conducted on NVIDIA GeForce RTX 4090 GPUs with 24GB memory, which is feasible for general academic research. 
All experiments and selection are implemented by PyTorch~\citep{pytorch} on platforms with 8 GPUs and 64 CPUs. 
Except for Tinyllama, we also adopt OPT~\citep{opt} and Pythia~\citep{pythia} to verify the scalability of our method on model architecture as ablation study shown in Section~\ref{sec:ablation}.
For feature extraction, we utilize the Contriever~\citep{contriever} with approximately 110M parameters to calculate feature representations of the text samples as defined in \eqref{eq:feature}. 
We also experiment with other pre-trained models as feature extractors, including the text encoder of CLIP~\citep{clip} and GPT-2~\citep{gpt2}, as discussed in the ablation study of Section~\ref{sec:ablation}.
See Appendix~\ref{app:model} for detailed introduction of model structures, as well as their training times.
\vspace{6pt}

\textbf{Baselines.}~~~~We compare our DiSF with \textbf{Random} selection and existing file selection methods for LLM pre-training, including \textbf{DSIR}~\citep{dsir}, QuRating~\citep{qrating}, \textbf{INGENIOUS}~\citep{ingenious}, and \textbf{D4}~\citep{d4}. 
Since DSIR improves on Heuristic classification~\citep{gpt3,palm}, we present only DSIR results based on Wikipedia and Books. 
For QuRating, based on the top judgment values, we select text samples for writing style (denoted as \textbf{QuRating-W}) and uniformly select across all styles (denoted as \textbf{QuRating-A}). 
{For INGENIOUS, we utilize Contriever to extract features, which is more efficient than warmed-up model, as analyzed in Appendix~\ref{app:INGENIOUS}.}
Additionally, for a comprehensive comparison, we include \textbf{Doremi}~\citep{doremi}, a recently proposed method that produces weights for pre-training on multiple text domains. 
We use the weights as the selection ratio of text samples in different domains in our experiment.
Notably, in our ablation studies, we also compare our method with \textbf{Full Data} pre-training, which refers to pre-training the LLMs on all training files in SlimPajama until the specified training budget is reached.
See Appendix~\ref{app:baseline} for detailed implementation of the baselines.
\vspace{9pt}

\textbf{Pre-training details.}~~~~We follow all settings in TinyLlama~\citep{tinyllama}. 
The optimizer is AdamW~\citep{adamw}, setting parameters $\beta_1$ at 0.9 and $\beta_2$ at 0.95. 
We adopt the cosine learning rate schedule with a maximum learning rate of 4e-4 and the minimum of 4e-5, the batch size of 2M tokens, the weight decay of 0.1, and the gradient clipping threshold of 1. 
The training budgets are 10B and 50B tokens, with 1.5\% selection budget of SlimPajama's training files.
Note that, we choose to report performance with a 1.5\% selection budget, since it achieves comparable performance compared to Full Data pre-training under 50B pre-training budget on TinyLlama 1.1B.
Unless otherwise specified, the selection scale $b$ is set to 1024.
See Appendix~\ref{app:model} for more details.
\vspace{15pt}

\subsection{Main Results}\label{sec:results}
\textbf{Performance on commonsense reasoning tasks.}~~~~As shown in Table~\ref{tb:main}, selection methods based on a target domain such as DSIR, D4, and QuRating-W improve performance on reading comprehension tasks like ARC-c and OBQA, but suffer significant declines on physical world knowledge tasks especially HellaSwag and PIQA (highlighted in blue), revealing the diversity dilemma. 
In contrast, Doremi, D4, QuRating-A, and our DiSF, which select samples from multiple text domains, achieve competitive results on OBQA and ARC-c while significantly improving overall performance across the remaining tasks.
This highlights the critical importance of diversity in pre-training LLMs.
Notably, our DiSF outperforms all baselines in terms of average performance across the seven tasks, with an average improvement of 2.5\% compared to DSIR. 
Furthermore, with increasing training budget, the improvement on DSIR becomes more pronounced, rising from 2.1\% to 3.4\% on TinyLlama 1.1B.

\begin{wraptable}{rt!}{0.4\textwidth}
\vspace{-22pt}
\setlength{\tabcolsep}{2pt}
\centering
\caption{Problem-solving performance on MMLU (5 shot) and BBH (3 shot) of TinyLlama 1.1B with 1.5\% selection budget and 50B pre-training budget.
}
\vspace{4pt}
\small
\centering
\begin{tabular}{l|cc}
\toprule[2pt]
Method & MMLU(5 shot) & BBH(3 shot) \\
\midrule
DSIR & $22.9_{\pm 0.4}$&$18.5_{\pm 0.4}$\\
QuRating-W& $23.1_{\pm 0.2}$&$18.7_{\pm 0.4}$\\
QuRating-A& $24.5_{\pm 0.4}$&$20.2_{\pm 0.4}$\\
D4 & $24.3_{\pm 0.5}$&$20.8_{\pm 0.4}$\\
Doremi & $23.1_{\pm 0.5}$&$20.3_{\pm 0.4}$\\
DiSF (Ours)& $\textbf{25.4}_{\pm 0.4}$&$\textbf{21.2}_{\pm 0.4}$\\
\bottomrule[2pt]
\end{tabular}
\label{tab:problemsolving}
\vspace{-10pt}
\end{wraptable}

\textbf{Scalability on model size.}~~~~To verify the scalability of selection methods across different model sizes, we use the TinyLlama architecture with models of 120M, 560M, and 1.1B parameters. 
As shown in Table~\ref{tb:main}, our DiSF consistently outperforms all baselines, demonstrating strong scalability with increasing model size. 
Notably, as the model scales from 120M to 1.1B, DiSF's improvement over DSIR grows from 2.8\% to 3.4\%, suggesting even greater effectiveness for larger LLMs.

\textbf{Performance on problem-solving tasks.}~~~~
As shown in Table~\ref{tab:problemsolving}, we further verify the performance of pre-trained TinyLlama 1.1B on two additional problem-solving tasks, MMLU and BBH, using various selection methods.
Except for better commonsense abilities, results of our DiSF shown in Table~\ref{tab:problemsolving}, demonstrate a better problem-solving capability, outperforming all other baselines. 
Specifically, on MMLU and BBH, our DiSF respectively achieves 2.5\% and 2.7\% improvement compared to DSIR. 

\begin{figure*}[t!]
\vspace{-10pt}
    \centering
    \subfigure{
    \centering
    \includegraphics[width=0.31\textwidth]{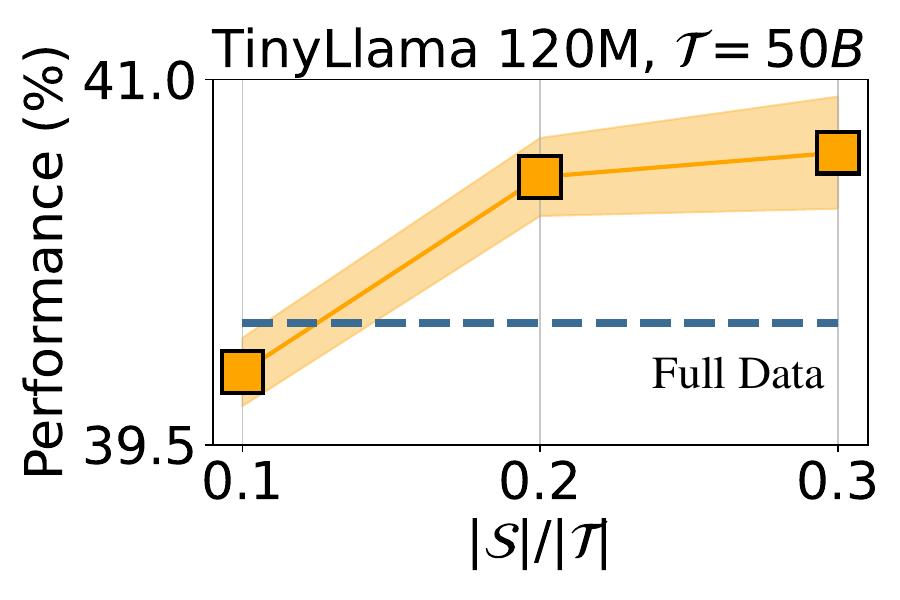}}
    \centering
    \subfigure{
    \centering
    \includegraphics[width=0.31\textwidth]{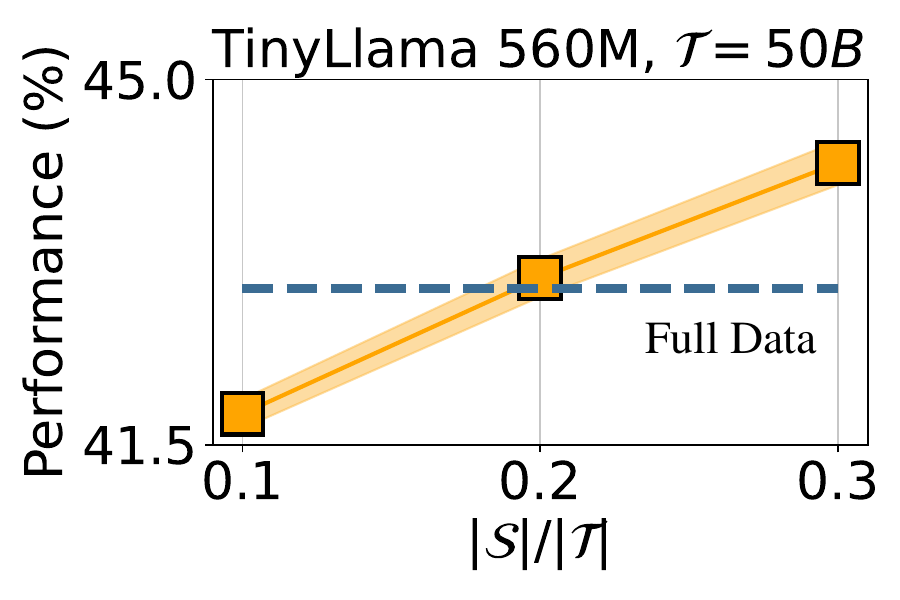}}
    \subfigure{
    \centering
    \includegraphics[width=0.31\textwidth]{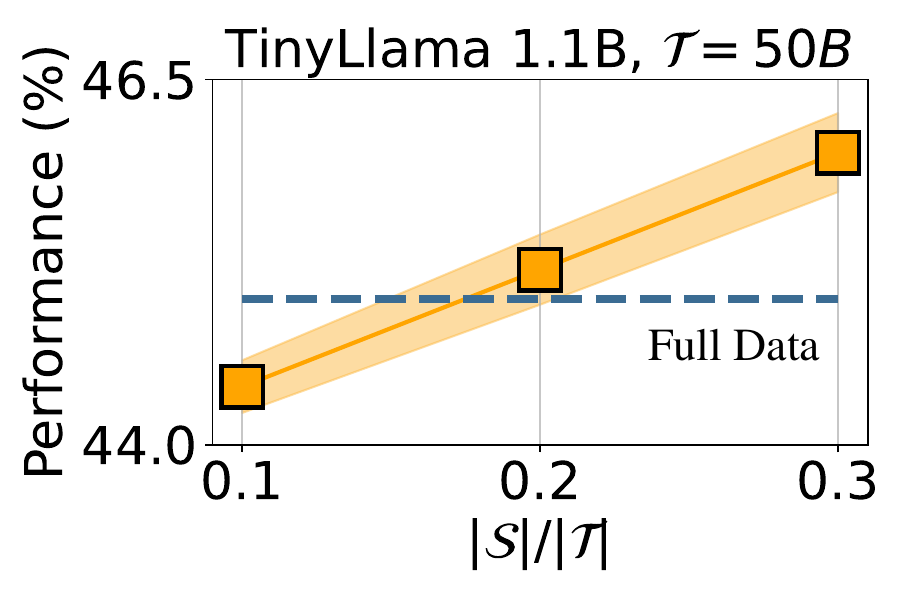}}
    \vspace{-0.6cm}
    \caption{Average performance of TinyLlama pre-trained after 50B tokens on files selected by our DiSF with varying selection budgets compared to Full Data pre-training on SlimPajama.}
    \label{fig:sample}
        \vspace{-0.2cm}
\end{figure*}

\begin{figure*}[t!]
    \vspace{-10pt}
    \hspace{10pt}
    \centering
    \subfigure{
    \centering
    \includegraphics[width=0.31\textwidth]{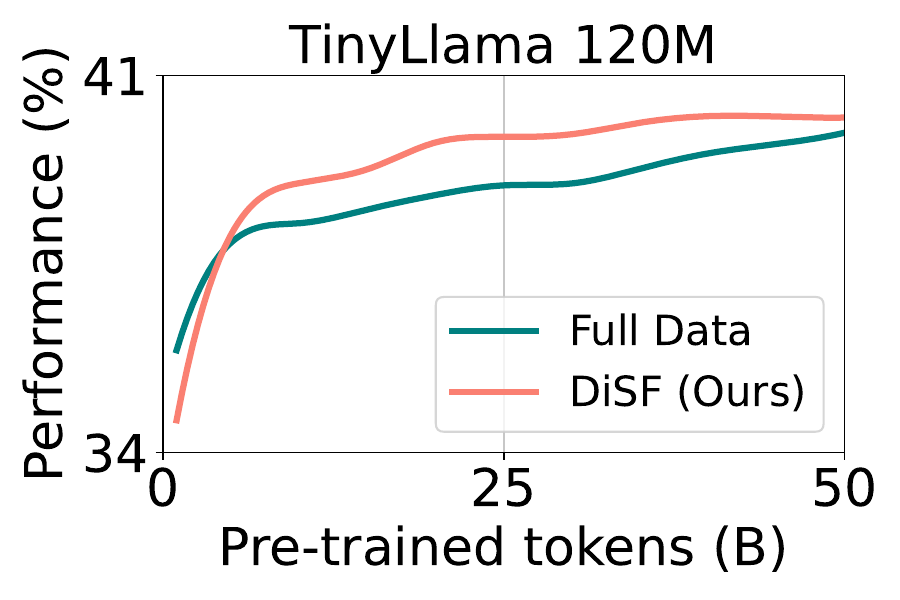}}
    \centering
    \subfigure{
    \centering
    \includegraphics[width=0.31\textwidth]{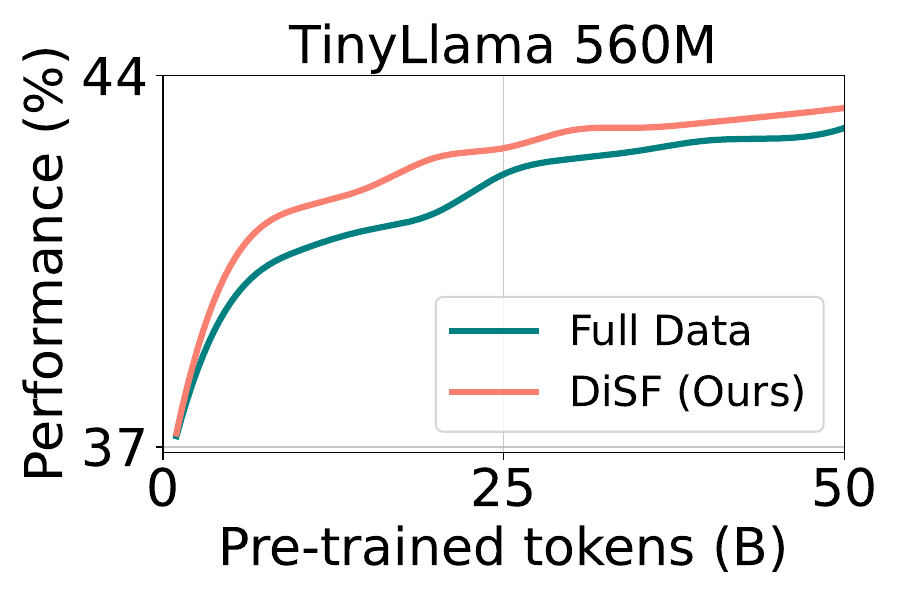}}
    \subfigure{
    \centering
    \includegraphics[width=0.31\textwidth]{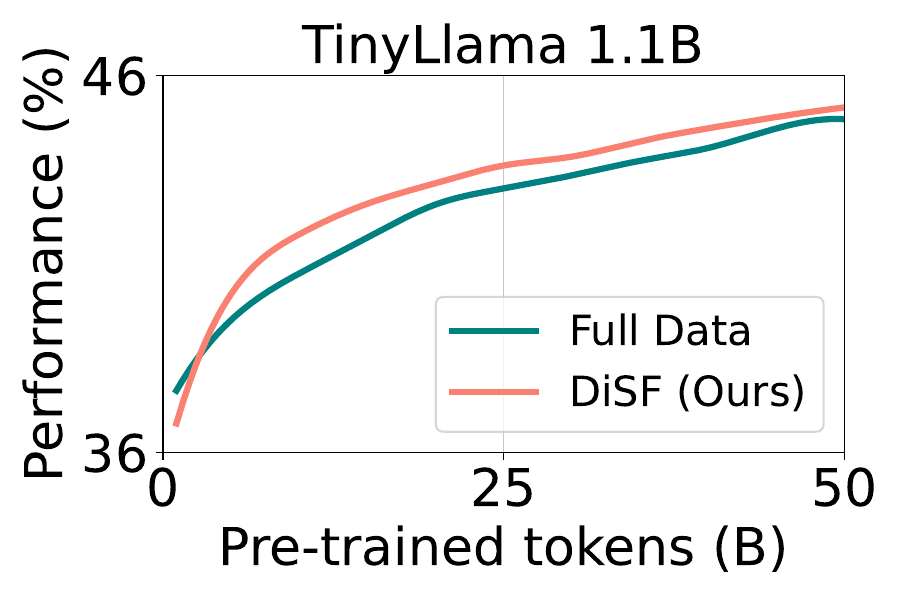}}
    \vspace{-0.4cm}
    \caption{Performance of TinyLlama pre-trained under our DiSF with 1.5\% selection budget compared to Full Data pre-training on SlimPajama, as the pre-trained tokens (training budget) increases.}
    \label{fig:training}
        \vspace{-0.4cm}
\end{figure*}

\begin{wrapfigure}{r}{0.45\textwidth}
\vspace{-15pt}
\includegraphics[width=0.43\textwidth]
{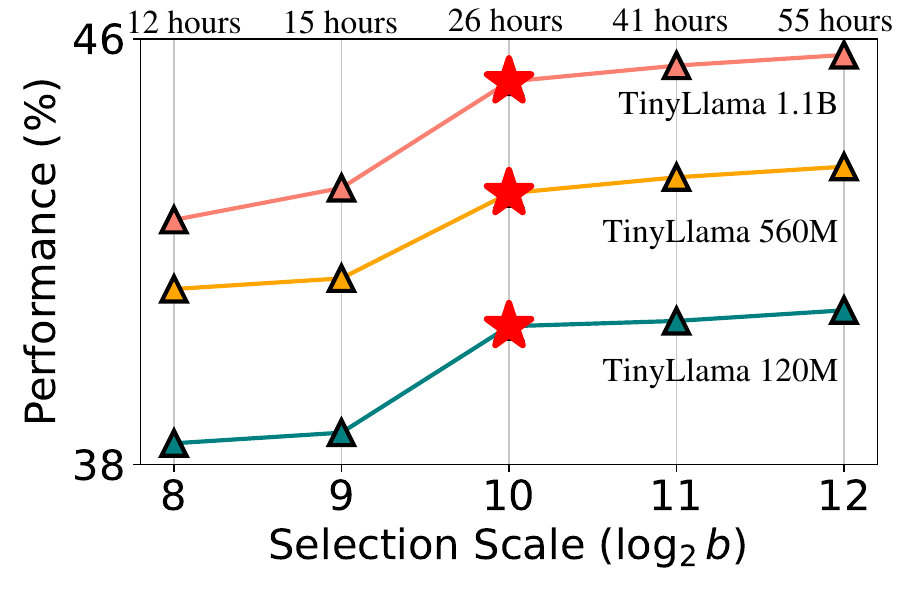} 
\vspace{-15pt}
\caption{
Performance of pre-trained TinyLlama under different selection scales ($\log_2{b}$). With 1.5\% selection budget and 50B training budget, we identify an ideal point with acceptable computational cost and near-optimal performance, marked by a red star.}\label{fig:scale}
\vspace{-10pt}
\end{wrapfigure}

\subsection{Efficiency Analysis}\label{sec:efficiency}
In this section, we try to analyze how many samples and computations we can save within the training budget, compared to Full Data pre-training, that is \textit{data efficiency} and \textit{training efficiency}. 
Please note that, Full Data pre-training refers to pre-training the LLMs on all SlimPajama's training files until the specified training budget is reached. 
In Figure~\ref{fig:sample}, we demonstrate the average commonsense performance of TinyLlama with 120M, 560M, and 1.1B parameters, pre-trained after 50B tokens on files selected by DiSF with varying selection budgets compared to Full Data. 
It is evident that to achieve the equivalent or superior performance compared to Full Data with 50B training budget, we only need to select about 20\% of the pre-trained tokens (1.5\% of SlimPajama's training files), achieving at least \textit{5x data efficiency}. 
In Figure~\ref{fig:training}, we show the curves of averaged commonsense reasoning performance during the pre-training under our DiSF with 1.5\% selection budget compared to Full Data. 
It can be seen that, when achieving the performance of Full Data pre-trained with 50B tokens, our methods only need 27B, 36B, and 40B tokens, respectively on TinyLlama with 120M, 560M, and 1.1B parameters, achieving an average of \textit{1.5x training efficiency}.

\subsection{Ablation Study}\label{sec:ablation}

\begin{wraptable}{rt!}{0.45\textwidth}
\vspace{-20pt}
\setlength{\tabcolsep}{4pt}
\centering
\caption{Ablation study of our DiSF with different model architectures compared to other selection methods, respectively using TinyLlama with 1.1B parameters, Pythia with 1B parameters and OPT with 1.3B parameters.
}
\vspace{2pt}
\renewcommand\arraystretch{0.85}
\centering
\begin{tabular}{l|ccc}
\toprule[2pt]
Method & TinyLlama & Pythia & OPT  \\
\midrule
DSIR & 41.8&41.4&43.1 \\
D4 & 44.2&43.9&44.7 \\
QuRating-A & 44.8&43.6&45.1\\
Doremi & 44.5&43.7&44.9\\
Ours & \textbf{45.2}&\textbf{44.2}&\textbf{45.5} \\
\bottomrule[2pt]
\end{tabular}
\label{tab:modelsize}
\vspace{-20pt}
\end{wraptable}

\textbf{Selection budget.}~~~~This part analyzes the performance of pre-trained LLM with different selection budgets when using our DiSF. 
We conduct the ablation on TinyLlama 120M with 50B training budget and present its average commonsense reasoning performance.
As shown in Figure \ref{fig:selectrate}, the performance initially rises rapidly with increasing selection ratios, reaching a peak of 41.2 with 3\% of total training files~(about 20B tokens). 
Once the selected samples exceed 3\%, the performance begins to decline and converge to the performance of Full Data pre-training.
This insight may help choose selection ratios and enhance understanding of our selection algorithm.

\textbf{Model architecture.}~~~~To further verify performance across different architectures, we additionally adopt two large language models: Pythia~\citep{pythia} with 1B parameters and OPT~\citep{opt} with 1.3B parameters. 
As shown in Table~\ref{tab:modelsize}, we compare the average commonsense reasoning performance of our DiSF with baselines, including DSIR, D4, QuRating-A, and Doremi, under the same training budget of 50B pre-trained tokens. 
The results demonstrate that our DiSF consistently outperforms the baselines, highlighting its effectiveness and scalability across different model architectures.

\begin{wraptable}{rt!}{0.45\textwidth}
\vspace{-22pt}
\setlength{\tabcolsep}{4pt}
\centering
\caption{Ablation study of our DiSF with different feature extractor. We show results, respectively using Contriever with 110M parameters, text encoder of CLIP with 70M parameters and GPT-2 with 117M parameters.
}
\vspace{2pt}
\renewcommand\arraystretch{0.85}
\centering
\begin{tabular}{l|ccc}
\toprule[2pt]
TinyLlama& CLIP  & Contriever& GPT-2 \\
\midrule 
120 M & 39.5 & \textbf{40.6}& 40.0\\
560 M & 42.1 &  \textbf{43.1}& 42.3\\
1.1 B & 43.9 &  \textbf{45.2}& 44.4 \\
\bottomrule[2pt]
\end{tabular}
\label{tab:extractor}
\vspace{-0.2cm}
\end{wraptable}

\textbf{Feature extractor.}~~~~In our experiments, we utilize a proxy model named Contriever~\citep{contriever}, with about 110M parameters, as feature extractor to provide the embedding space for selected text samples. 
We also try two other pre-trained models: the text encoder of CLIP~\citep{clip} with about 70M parameters and GPT-2~\citep{gpt2} with about 117M parameters. 
As shown in Table \ref{tab:extractor}, CLIP fails to deliver satisfactory performance, because it struggles with longer text sequences. 
GPT-2 does not outperform the Contriever model because it is optimized for autoregressive text generation rather than producing meaningful sentence representations, whereas Contriever, specifically designed for measuring text similarity, excels at capturing text diversity.

\begin{wrapfigure}{rt!}{0.45\textwidth}
\vspace{-15pt}
\includegraphics[width=0.45\textwidth]
{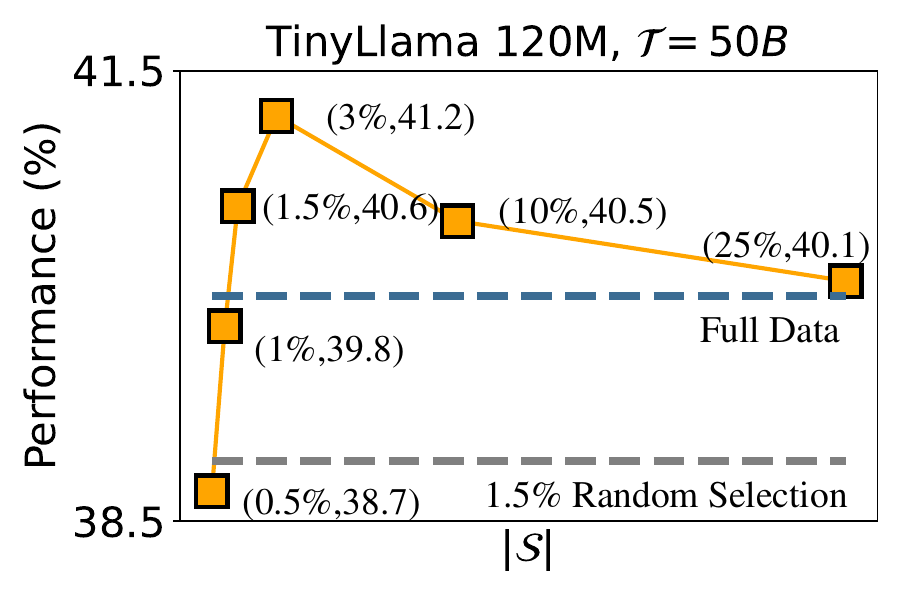} 
\vspace{-25pt}
\caption{Average commonsense reasoning performance of TinyLlama 120M pre-trained under DiSF with different selection budget and 50B training budget. Blue and grey lines respectively denote Full Data pre-training and random selection with 1.5\% selection budget.
}\label{fig:selectrate}
\vspace{-10pt}
\end{wrapfigure}

\textbf{Selection scale.}~~~~To manage the computational cost of file selection, we apply submodular optimization with classical greedy algorithm at the batch scale, as shown in Algorithm~\ref{alg:DiSF}, introducing the hyper-parameter, selection scale $b$. 
The computational complexity of the selection process, defined in \eqref{eq:select}, is $\mathcal{O}(\frac{b}{\mathcal{|S|}^2})$, where the larger $b$ increases the computational burden. 
As ablation on $b$ shown in Figure~\ref{fig:scale}, a larger batch size significantly raises computational costs, while a smaller batch size fails to select sufficiently diversified files.
With a fixed selection budget of 1.5\%, training budget of 50B tokens, and Contriever as the feature extractor, we identified an ideal point of 1024, marked by a red star, which balances computational time and near-optimal performance. 
This insight may provide valuable guidance for choosing an appropriate selection scale in other settings.

\section{Conclusion}
In this work, we revisit recent innovations in file selection for pre-training large language models (LLMs) and identify a diversity dilemma: dimensional collapse, where performance improves on specific tasks but degrades overall across diverse tasks. 
To address this, we propose a novel Diversified File selection method (DiSF) which selects decorrelated text files in the embedding space to enhance diversity. 
DiSF achieves more uniform eigenvalues of the feature covariance matrix by minimizing its Frobenius norm and solve it with a greedy algorithm.  
We analyze its time complexity and approximation to optimal solution under $\gamma$-weakly submodular optimization, and establish a benchmark with TinyLlama architecture, evaluating performance across nine tasks from the Harness framework. 
Extensive experiments and ablation studies demonstrate the critical role of diversity in file selection for LLM pre-training and showcase DiSF's superior effectiveness and efficiency.

\section{Acknowledgement}
The work is supported by the National Key R$\&$D Program of China (No. 2022ZD0160702),  STCSM (No. 22511105700, No. 21DZ1100100), 111 plan (No. BP0719010), National Natural Science Foundation of China (No. 62306178), Shenzhen Basic Research Project (Natural Science Foundation) Basic Research Key Project (NO. JCYJ20241202124430041), CCF-Baidu Open fund (NO. CCF-Baidu202413), and the National Research Foundation, Singapore, under its NRF Professorship Award No. NRF-P2024-001. Ziqing Fan is partially supported by Wu Wen Jun Honorary Doctoral Scholarship, AI Institute, Shanghai Jiao Tong University.

\bibliography{main}

\begin{thebibliography}{75}
\providecommand{\natexlab}[1]{#1}
\providecommand{\url}[1]{\texttt{#1}}
\expandafter\ifx\csname urlstyle\endcsname\relax
  \providecommand{\doi}[1]{doi: #1}\else
  \providecommand{\doi}{doi: \begingroup \urlstyle{rm}\Url}\fi

\bibitem[Abbas et~al.(2024)Abbas, Tirumala, Simig, Ganguli, and Morcos]{semdedup}
Amro Kamal~Mohamed Abbas, Kushal Tirumala, Daniel Simig, Surya Ganguli, and Ari~S Morcos.
\newblock Semdedup: Data-efficient learning at web-scale through semantic deduplication.
\newblock In \emph{ICLR 2023 Workshop on Mathematical and Empirical Understanding of Foundation Models}, 2024.

\bibitem[Balakrishnan et~al.(2022)Balakrishnan, Li, Zhou, Himayat, Smith, and Bilmes]{fedsub}
Ravikumar Balakrishnan, Tian Li, Tianyi Zhou, Nageen Himayat, Virginia Smith, and Jeff Bilmes.
\newblock Diverse client selection for federated learning via submodular maximization.
\newblock In \emph{International Conference on Learning Representations}, 2022.

\bibitem[Bardes et~al.(2022)Bardes, Ponce, and Lecun]{dc2}
Adrien Bardes, Jean Ponce, and Yann Lecun.
\newblock Vicreg: Variance-invariance-covariance regularization for self-supervised learning.
\newblock In \emph{ICLR 2022-International Conference on Learning Representations}, 2022.

\bibitem[Biderman et~al.(2023)Biderman, Schoelkopf, Anthony, Bradley, O’Brien, Hallahan, Khan, Purohit, Prashanth, Raff, et~al.]{pythia}
Stella Biderman, Hailey Schoelkopf, Quentin~Gregory Anthony, Herbie Bradley, Kyle O’Brien, Eric Hallahan, Mohammad~Aflah Khan, Shivanshu Purohit, USVSN~Sai Prashanth, Edward Raff, et~al.
\newblock Pythia: A suite for analyzing large language models across training and scaling.
\newblock In \emph{International Conference on Machine Learning}, pp.\  2397--2430. PMLR, 2023.

\bibitem[Bisk et~al.(2020)Bisk, Zellers, Gao, Choi, et~al.]{bisk2020piqa}
Yonatan Bisk, Rowan Zellers, Jianfeng Gao, Yejin Choi, et~al.
\newblock Piqa: Reasoning about physical commonsense in natural language.
\newblock In \emph{Proceedings of the AAAI conference on artificial intelligence}, volume~34, pp.\  7432--7439, 2020.

\bibitem[Brown(2020{\natexlab{a}})]{brown2020language}
Tom~B Brown.
\newblock Language models are few-shot learners.
\newblock \emph{arXiv preprint arXiv:2005.14165}, 2020{\natexlab{a}}.

\bibitem[Brown(2020{\natexlab{b}})]{gpt3}
Tom~B Brown.
\newblock Language models are few-shot learners.
\newblock \emph{arXiv preprint arXiv:2005.14165}, 2020{\natexlab{b}}.

\bibitem[Chen et~al.(2020)Chen, Kornblith, Norouzi, and Hinton]{dc3}
Ting Chen, Simon Kornblith, Mohammad Norouzi, and Geoffrey Hinton.
\newblock A simple framework for contrastive learning of visual representations.
\newblock In \emph{International conference on machine learning}, pp.\  1597--1607. PMLR, 2020.

\bibitem[Chowdhery et~al.(2023{\natexlab{a}})Chowdhery, Narang, Devlin, Bosma, Mishra, Roberts, Barham, Chung, Sutton, Gehrmann, et~al.]{chowdhery2023palm}
Aakanksha Chowdhery, Sharan Narang, Jacob Devlin, Maarten Bosma, Gaurav Mishra, Adam Roberts, Paul Barham, Hyung~Won Chung, Charles Sutton, Sebastian Gehrmann, et~al.
\newblock Palm: Scaling language modeling with pathways.
\newblock \emph{Journal of Machine Learning Research}, 24\penalty0 (240):\penalty0 1--113, 2023{\natexlab{a}}.

\bibitem[Chowdhery et~al.(2023{\natexlab{b}})Chowdhery, Narang, Devlin, Bosma, Mishra, Roberts, Barham, Chung, Sutton, Gehrmann, et~al.]{palm}
Aakanksha Chowdhery, Sharan Narang, Jacob Devlin, Maarten Bosma, Gaurav Mishra, Adam Roberts, Paul Barham, Hyung~Won Chung, Charles Sutton, Sebastian Gehrmann, et~al.
\newblock Palm: Scaling language modeling with pathways.
\newblock \emph{Journal of Machine Learning Research}, 24\penalty0 (240):\penalty0 1--113, 2023{\natexlab{b}}.

\bibitem[Clark et~al.(2019)Clark, Lee, Chang, Kwiatkowski, Collins, and Toutanova]{clark2019boolq}
Christopher Clark, Kenton Lee, Ming-Wei Chang, Tom Kwiatkowski, Michael Collins, and Kristina Toutanova.
\newblock Boolq: Exploring the surprising difficulty of natural yes/no questions.
\newblock \emph{arXiv preprint arXiv:1905.10044}, 2019.

\bibitem[Clark et~al.(2018)Clark, Cowhey, Etzioni, Khot, Sabharwal, Schoenick, and Tafjord]{allenai:arc}
Peter Clark, Isaac Cowhey, Oren Etzioni, Tushar Khot, Ashish Sabharwal, Carissa Schoenick, and Oyvind Tafjord.
\newblock Think you have solved question answering? try arc, the ai2 reasoning challenge.
\newblock \emph{arXiv:1803.05457v1}, 2018.

\bibitem[Computer(2023)]{redpajama}
Together Computer.
\newblock Redpajama: an open dataset for training large language models, 2023.
\newblock URL \url{https://github.com/togethercomputer/RedPajama-Data}.

\bibitem[Dao et~al.(2022)Dao, Fu, Ermon, Rudra, and R{\'e}]{flash}
Tri Dao, Dan Fu, Stefano Ermon, Atri Rudra, and Christopher R{\'e}.
\newblock Flashattention: Fast and memory-efficient exact attention with io-awareness.
\newblock \emph{Advances in Neural Information Processing Systems}, 35:\penalty0 16344--16359, 2022.

\bibitem[DAS(2011)]{ratio}
A~DAS.
\newblock Submodular meets spectral: Greedy algorithms for subset selection, sparse approximation and dictionary selection.
\newblock In \emph{Proc. 28th Int. Conf. on Machine Learning (ICML'11)}, pp.\  1057--1064, 2011.

\bibitem[Fan et~al.(2024)Fan, Yao, Zhang, Lyu, Zhang, and Wang]{fanfederated}
Ziqing Fan, Jiangchao Yao, Ruipeng Zhang, Lingjuan Lyu, Ya~Zhang, and Yanfeng Wang.
\newblock Federated learning under partially class-disjoint data via manifold reshaping.
\newblock \emph{arXiv preprint arXiv:2405.18983}, 2024.

\bibitem[Fujishige(2005)]{function}
Satoru Fujishige.
\newblock \emph{Submodular functions and optimization}.
\newblock Elsevier, 2005.

\bibitem[Gao et~al.(2024)Gao, Tow, Abbasi, Biderman, Black, DiPofi, Foster, Golding, Hsu, Le~Noac'h, Li, McDonell, Muennighoff, Ociepa, Phang, Reynolds, Schoelkopf, Skowron, Sutawika, Tang, Thite, Wang, Wang, and Zou]{harness}
Leo Gao, Jonathan Tow, Baber Abbasi, Stella Biderman, Sid Black, Anthony DiPofi, Charles Foster, Laurence Golding, Jeffrey Hsu, Alain Le~Noac'h, Haonan Li, Kyle McDonell, Niklas Muennighoff, Chris Ociepa, Jason Phang, Laria Reynolds, Hailey Schoelkopf, Aviya Skowron, Lintang Sutawika, Eric Tang, Anish Thite, Ben Wang, Kevin Wang, and Andy Zou.
\newblock A framework for few-shot language model evaluation, 07 2024.
\newblock URL \url{https://zenodo.org/records/12608602}.

\bibitem[Geng \& Liu(2023)Geng and Liu]{openllama}
Xinyang Geng and Hao Liu.
\newblock Openllama: An open reproduction of llama, May 2023.
\newblock URL \url{https://github.com/openlm-research/open_llama}.

\bibitem[Guillory \& Bilmes(2011)Guillory and Bilmes]{active2}
Andrew Guillory and Jeff~A Bilmes.
\newblock Online submodular set cover, ranking, and repeated active learning.
\newblock \emph{Advances in neural information processing systems}, 24, 2011.

\bibitem[Hendrycks et~al.(2021)Hendrycks, Burns, Basart, Zou, Mazeika, Song, and Steinhardt]{MMLU}
Dan Hendrycks, Collin Burns, Steven Basart, Andy Zou, Mantas Mazeika, Dawn Song, and Jacob Steinhardt.
\newblock Measuring massive multitask language understanding.
\newblock In \emph{International Conference on Learning Representations}, 2021.

\bibitem[Hoffmann et~al.(2022)Hoffmann, Borgeaud, Mensch, Buchatskaya, Cai, Rutherford, Casas, Hendricks, Welbl, Clark, et~al.]{computeoptimal}
Jordan Hoffmann, Sebastian Borgeaud, Arthur Mensch, Elena Buchatskaya, Trevor Cai, Eliza Rutherford, Diego de~Las Casas, Lisa~Anne Hendricks, Johannes Welbl, Aidan Clark, et~al.
\newblock Training compute-optimal large language models.
\newblock \emph{arXiv preprint arXiv:2203.15556}, 2022.

\bibitem[Hong et~al.(2024)Hong, Lyu, Yao, Zhang, Tsang, and Wang]{diversity1}
Feng Hong, Yueming Lyu, Jiangchao Yao, Ya~Zhang, Ivor Tsang, and Yanfeng Wang.
\newblock Diversified batch selection for training acceleration.
\newblock In \emph{Forty-first International Conference on Machine Learning}, 2024.

\bibitem[Izacard et~al.(2021)Izacard, Caron, Hosseini, Riedel, Bojanowski, Joulin, and Grave]{contriever}
Gautier Izacard, Mathilde Caron, Lucas Hosseini, Sebastian Riedel, Piotr Bojanowski, Armand Joulin, and Edouard Grave.
\newblock Unsupervised dense information retrieval with contrastive learning, 2021.
\newblock URL \url{https://arxiv.org/abs/2112.09118}.

\bibitem[Jain et~al.(2024)Jain, Nandy, Aggarwal, Tendulkar, Iyer, and De]{subset-r1}
Eeshaan Jain, Tushar Nandy, Gaurav Aggarwal, Ashish Tendulkar, Rishabh Iyer, and Abir De.
\newblock Efficient data subset selection to generalize training across models: transductive and inductive networks.
\newblock \emph{Advances in Neural Information Processing Systems}, 36, 2024.

\bibitem[Jiang et~al.(2025)Jiang, Liao, Wang, Zhang, and Wang]{jiangshuyang}
Shuyang Jiang, Yusheng Liao, Yanfeng Wang, Ya~Zhang, and Yu~Wang.
\newblock Fine-tuning with reserved majority for noise reduction.
\newblock In \emph{The Thirteenth International Conference on Learning Representations}, 2025.
\newblock URL \url{https://openreview.net/forum?id=ZV7CLf0RHK}.

\bibitem[Jing et~al.(2022)Jing, Vincent, LeCun, and Tian]{dc}
Li~Jing, Pascal Vincent, Yann LeCun, and Yuandong Tian.
\newblock Understanding dimensional collapse in contrastive self-supervised learning.
\newblock In \emph{International Conference on Learning Representations}, 2022.

\bibitem[Karanam et~al.(2022)Karanam, Killamsetty, Kokel, and Iyer]{ld}
Athresh Karanam, Krishnateja Killamsetty, Harsha Kokel, and Rishabh Iyer.
\newblock Orient: Submodular mutual information measures for data subset selection under distribution shift.
\newblock \emph{Advances in neural information processing systems}, 35:\penalty0 31796--31808, 2022.

\bibitem[Kaushal et~al.(2019)Kaushal, Iyer, Kothawade, Mahadev, Doctor, and Ramakrishnan]{gc}
Vishal Kaushal, Rishabh Iyer, Suraj Kothawade, Rohan Mahadev, Khoshrav Doctor, and Ganesh Ramakrishnan.
\newblock Learning from less data: A unified data subset selection and active learning framework for computer vision.
\newblock In \emph{2019 IEEE Winter Conference on Applications of Computer Vision (WACV)}, pp.\  1289--1299. IEEE, 2019.

\bibitem[Kothawade et~al.(2022)Kothawade, Kaushal, Ramakrishnan, Bilmes, and Iyer]{sum1}
Suraj Kothawade, Vishal Kaushal, Ganesh Ramakrishnan, Jeff Bilmes, and Rishabh Iyer.
\newblock Prism: A rich class of parameterized submodular information measures for guided data subset selection.
\newblock In \emph{Proceedings of the AAAI Conference on Artificial Intelligence}, volume~36, pp.\  10238--10246, 2022.

\bibitem[Krause \& Golovin(2014)Krause and Golovin]{location2}
Andreas Krause and Daniel Golovin.
\newblock Submodular function maximization.
\newblock \emph{Tractability}, 3\penalty0 (71-104):\penalty0 3, 2014.

\bibitem[Kumari et~al.(2024)Kumari, Wang, Das, Zhou, and Bilmes]{sum2}
Lilly Kumari, Shengjie Wang, Arnav Das, Tianyi Zhou, and Jeff Bilmes.
\newblock An end-to-end submodular framework for data-efficient in-context learning.
\newblock In \emph{Findings of the Association for Computational Linguistics: NAACL 2024}, pp.\  3293--3308, 2024.

\bibitem[Lee et~al.(2021)Lee, Cao, Fan, Jiang, and Chong]{fan1}
Roy Ka-Wei Lee, Rui Cao, Ziqing Fan, Jing Jiang, and Wen-Haw Chong.
\newblock Disentangling hate in online memes.
\newblock In \emph{Proceedings of the 29th ACM international conference on multimedia}, pp.\  5138--5147, 2021.

\bibitem[Li et~al.(2023)Li, Allal, Zi, Muennighoff, Kocetkov, Mou, Marone, Akiki, Li, Chim, et~al.]{starcoder}
Raymond Li, Loubna~Ben Allal, Yangtian Zi, Niklas Muennighoff, Denis Kocetkov, Chenghao Mou, Marc Marone, Christopher Akiki, Jia Li, Jenny Chim, et~al.
\newblock Starcoder: may the source be with you!
\newblock \emph{arXiv preprint arXiv:2305.06161}, 2023.

\bibitem[LightningAI(2023)]{litgpt}
LightningAI.
\newblock Litgpt.
\newblock \url{https://github.com/Lightning-AI/litgpt}, 2023.

\bibitem[Lin \& Bilmes(2011)Lin and Bilmes]{subset-r2}
Hui Lin and Jeff~A Bilmes.
\newblock Optimal selection of limited vocabulary speech corpora.
\newblock In \emph{INTERSPEECH}, pp.\  1489--1492, 2011.

\bibitem[Loshchilov \& Hutter(2019)Loshchilov and Hutter]{adamw}
Ilya Loshchilov and Frank Hutter.
\newblock Decoupled weight decay regularization.
\newblock In \emph{International Conference on Learning Representations}, 2019.

\bibitem[Mihaylov et~al.(2018)Mihaylov, Clark, Khot, and Sabharwal]{OpenBookQA2018}
Todor Mihaylov, Peter Clark, Tushar Khot, and Ashish Sabharwal.
\newblock Can a suit of armor conduct electricity? a new dataset for open book question answering.
\newblock In \emph{EMNLP}, 2018.

\bibitem[Nemhauser et~al.(1978)Nemhauser, Wolsey, and Fisher]{submodular}
George~L Nemhauser, Laurence~A Wolsey, and Marshall~L Fisher.
\newblock An analysis of approximations for maximizing submodular set functions—i.
\newblock \emph{Mathematical programming}, 14:\penalty0 265--294, 1978.

\bibitem[Pang et~al.(2024)Pang, Tang, Ye, Xiong, Zhang, Wang, and Chen]{yerui2}
Xianghe Pang, Shuo Tang, Rui Ye, Yuxin Xiong, Bolun Zhang, Yanfeng Wang, and Siheng Chen.
\newblock Self-alignment of large language models via monopolylogue-based social scene simulation.
\newblock \emph{arXiv preprint arXiv:2402.05699}, 2024.

\bibitem[Paszke et~al.(2019)Paszke, Gross, Massa, Lerer, Bradbury, Chanan, Killeen, Lin, Gimelshein, Antiga, et~al.]{pytorch}
Adam Paszke, Sam Gross, Francisco Massa, Adam Lerer, James Bradbury, Gregory Chanan, Trevor Killeen, Zeming Lin, Natalia Gimelshein, Luca Antiga, et~al.
\newblock Pytorch: An imperative style, high-performance deep learning library.
\newblock \emph{Advances in neural information processing systems}, 32, 2019.

\bibitem[Patterson et~al.(2021)Patterson, Gonzalez, Le, Liang, Munguia, Rothchild, So, Texier, and Dean]{carbon}
David Patterson, Joseph Gonzalez, Quoc Le, Chen Liang, Lluis-Miquel Munguia, Daniel Rothchild, David So, Maud Texier, and Jeff Dean.
\newblock Carbon emissions and large neural network training.
\newblock \emph{arXiv preprint arXiv:2104.10350}, 2021.

\bibitem[Radford et~al.(2019)Radford, Wu, Child, Luan, Amodei, Sutskever, et~al.]{gpt2}
Alec Radford, Jeffrey Wu, Rewon Child, David Luan, Dario Amodei, Ilya Sutskever, et~al.
\newblock Language models are unsupervised multitask learners.
\newblock \emph{OpenAI blog}, 1\penalty0 (8):\penalty0 9, 2019.

\bibitem[Radford et~al.(2021)Radford, Kim, Hallacy, Ramesh, Goh, Agarwal, Sastry, Askell, Mishkin, Clark, et~al.]{clip}
Alec Radford, Jong~Wook Kim, Chris Hallacy, Aditya Ramesh, Gabriel Goh, Sandhini Agarwal, Girish Sastry, Amanda Askell, Pamela Mishkin, Jack Clark, et~al.
\newblock Learning transferable visual models from natural language supervision.
\newblock In \emph{International conference on machine learning}, pp.\  8748--8763. PMLR, 2021.

\bibitem[Rae et~al.(2021)Rae, Borgeaud, Cai, Millican, Hoffmann, Song, Aslanides, Henderson, Ring, Young, et~al.]{cost1}
Jack~W Rae, Sebastian Borgeaud, Trevor Cai, Katie Millican, Jordan Hoffmann, Francis Song, John Aslanides, Sarah Henderson, Roman Ring, Susannah Young, et~al.
\newblock Scaling language models: Methods, analysis \& insights from training gopher.
\newblock \emph{arXiv preprint arXiv:2112.11446}, 2021.

\bibitem[Renduchintala et~al.(1991)Renduchintala, Killamsetty, Bhatia, Aggarwal, Ramakrishnan, Iyer, and Krishnamurthy]{ingenious}
HSVNS~Kowndinya Renduchintala, Krishnateja Killamsetty, Sumit Bhatia, Milan Aggarwal, Ganesh Ramakrishnan, Rishabh~K Iyer, and Balaji Krishnamurthy.
\newblock Ingenious: Using informative data subsets for efficient pre-training of language models.
\newblock In \emph{The 2023 Conference on Empirical Methods in Natural Language Processing}, 1991.

\bibitem[Sakaguchi et~al.(2021)Sakaguchi, Bras, Bhagavatula, and Choi]{sakaguchi2021winogrande}
Keisuke Sakaguchi, Ronan~Le Bras, Chandra Bhagavatula, and Yejin Choi.
\newblock Winogrande: An adversarial winograd schema challenge at scale.
\newblock \emph{Communications of the ACM}, 64\penalty0 (9):\penalty0 99--106, 2021.

\bibitem[Salhi(1991{\natexlab{a}})]{facility}
Said Salhi.
\newblock Discrete location theory.
\newblock \emph{Journal of the Operational Research Society}, 42\penalty0 (12):\penalty0 1124--1125, 1991{\natexlab{a}}.

\bibitem[Salhi(1991{\natexlab{b}})]{location1}
Said Salhi.
\newblock Discrete location theory.
\newblock \emph{Journal of the Operational Research Society}, 42\penalty0 (12):\penalty0 1124--1125, 1991{\natexlab{b}}.

\bibitem[Santiago \& Yoshida(2020)Santiago and Yoshida]{pessi}
Richard Santiago and Yuichi Yoshida.
\newblock Weakly submodular function maximization using local submodularity ratio.
\newblock \emph{arXiv preprint arXiv:2004.14650}, 2020.

\bibitem[Shi et~al.(2022)Shi, Zhou, Liang, Jiang, Feng, Torr, Bai, and Tan]{cwd}
Yujun Shi, Kuangqi Zhou, Jian Liang, Zihang Jiang, Jiashi Feng, Philip~HS Torr, Song Bai, and Vincent~YF Tan.
\newblock Mimicking the oracle: An initial phase decorrelation approach for class incremental learning.
\newblock In \emph{Proceedings of the IEEE/CVF conference on computer vision and pattern recognition}, pp.\  16722--16731, 2022.

\bibitem[Sorscher et~al.(2022)Sorscher, Geirhos, Shekhar, Ganguli, and Morcos]{proto}
Ben Sorscher, Robert Geirhos, Shashank Shekhar, Surya Ganguli, and Ari Morcos.
\newblock Beyond neural scaling laws: beating power law scaling via data pruning.
\newblock \emph{Advances in Neural Information Processing Systems}, 35:\penalty0 19523--19536, 2022.

\bibitem[Srivastava et~al.(2022)Srivastava, Rastogi, Rao, Shoeb, Abid, Fisch, Brown, Santoro, Gupta, Garriga-Alonso, et~al.]{bigbench}
Aarohi Srivastava, Abhinav Rastogi, Abhishek Rao, Abu Awal~Md Shoeb, Abubakar Abid, Adam Fisch, Adam~R Brown, Adam Santoro, Aditya Gupta, Adri{\`a} Garriga-Alonso, et~al.
\newblock Beyond the imitation game: Quantifying and extrapolating the capabilities of language models.
\newblock \emph{arXiv preprint arXiv:2206.04615}, 2022.

\bibitem[Suzgun et~al.(2022)Suzgun, Scales, Sch{\"a}rli, Gehrmann, Tay, Chung, Chowdhery, Le, Chi, Zhou, et~al.]{bbh}
Mirac Suzgun, Nathan Scales, Nathanael Sch{\"a}rli, Sebastian Gehrmann, Yi~Tay, Hyung~Won Chung, Aakanksha Chowdhery, Quoc~V Le, Ed~H Chi, Denny Zhou, et~al.
\newblock Challenging big-bench tasks and whether chain-of-thought can solve them.
\newblock \emph{arXiv preprint arXiv:2210.09261}, 2022.

\bibitem[Tang et~al.(2024)Tang, Pang, Liu, Tang, Ye, Dong, Wang, and Chen]{tang}
Shuo Tang, Xianghe Pang, Zexi Liu, Bohan Tang, Rui Ye, Xiaowen Dong, Yanfeng Wang, and Siheng Chen.
\newblock Synthesizing post-training data for llms through multi-agent simulation.
\newblock \emph{arXiv preprint arXiv:2410.14251}, 2024.

\bibitem[Thoppilan et~al.(2022)Thoppilan, De~Freitas, Hall, Shazeer, Kulshreshtha, Cheng, Jin, Bos, Baker, Du, et~al.]{cost2}
Romal Thoppilan, Daniel De~Freitas, Jamie Hall, Noam Shazeer, Apoorv Kulshreshtha, Heng-Tze Cheng, Alicia Jin, Taylor Bos, Leslie Baker, Yu~Du, et~al.
\newblock Lamda: Language models for dialog applications.
\newblock \emph{arXiv preprint arXiv:2201.08239}, 2022.

\bibitem[Tirumala et~al.(2023)Tirumala, Simig, Aghajanyan, and Morcos]{d4}
Kushal Tirumala, Daniel Simig, Armen Aghajanyan, and Ari Morcos.
\newblock D4: Improving llm pretraining via document de-duplication and diversification.
\newblock \emph{Advances in Neural Information Processing Systems}, 36, 2023.

\bibitem[Touvron et~al.(2023{\natexlab{a}})Touvron, Lavril, Izacard, Martinet, Lachaux, Lacroix, Rozi{\`e}re, Goyal, Hambro, Azhar, et~al.]{llama}
Hugo Touvron, Thibaut Lavril, Gautier Izacard, Xavier Martinet, Marie-Anne Lachaux, Timoth{\'e}e Lacroix, Baptiste Rozi{\`e}re, Naman Goyal, Eric Hambro, Faisal Azhar, et~al.
\newblock Llama: Open and efficient foundation language models.
\newblock \emph{arXiv preprint arXiv:2302.13971}, 2023{\natexlab{a}}.

\bibitem[Touvron et~al.(2023{\natexlab{b}})Touvron, Lavril, Izacard, Martinet, Lachaux, Lacroix, Rozi{\`e}re, Goyal, Hambro, Azhar, et~al.]{touvron2023llama}
Hugo Touvron, Thibaut Lavril, Gautier Izacard, Xavier Martinet, Marie-Anne Lachaux, Timoth{\'e}e Lacroix, Baptiste Rozi{\`e}re, Naman Goyal, Eric Hambro, Faisal Azhar, et~al.
\newblock Llama: Open and efficient foundation language models.
\newblock \emph{arXiv preprint arXiv:2302.13971}, 2023{\natexlab{b}}.

\bibitem[Van~der Maaten \& Hinton(2008)Van~der Maaten and Hinton]{tsne}
Laurens Van~der Maaten and Geoffrey Hinton.
\newblock Visualizing data using t-sne.
\newblock \emph{Journal of machine learning research}, 9\penalty0 (11), 2008.

\bibitem[Wang et~al.(2024)Wang, Fan, Hu, Chen, Wang, and Wang]{wang1}
Pingjie Wang, Ziqing Fan, Shengchao Hu, Zhe Chen, Yanfeng Wang, and Yu~Wang.
\newblock Reconstruct the pruned model without any retraining.
\newblock \emph{arXiv preprint arXiv:2407.13331}, 2024.

\bibitem[Wei et~al.(2015)Wei, Iyer, and Bilmes]{active1}
Kai Wei, Rishabh Iyer, and Jeff Bilmes.
\newblock Submodularity in data subset selection and active learning.
\newblock In \emph{International conference on machine learning}, pp.\  1954--1963. PMLR, 2015.

\bibitem[Wettig et~al.(2024)Wettig, Gupta, Malik, and Chen]{qrating}
Alexander Wettig, Aatmik Gupta, Saumya Malik, and Danqi Chen.
\newblock Qurating: Selecting high-quality data for training language models.
\newblock In \emph{Forty-first International Conference on Machine Learning}, 2024.

\bibitem[Xie et~al.(2023{\natexlab{a}})Xie, Pham, Dong, Du, Liu, Lu, Liang, Le, Ma, and Yu]{doremi}
Sang~Michael Xie, Hieu Pham, Xuanyi Dong, Nan Du, Hanxiao Liu, Yifeng Lu, Percy~S Liang, Quoc~V Le, Tengyu Ma, and Adams~Wei Yu.
\newblock Doremi: Optimizing data mixtures speeds up language model pretraining.
\newblock \emph{Advances in Neural Information Processing Systems}, 36, 2023{\natexlab{a}}.

\bibitem[Xie et~al.(2023{\natexlab{b}})Xie, Santurkar, Ma, and Liang]{dsir}
Sang~Michael Xie, Shibani Santurkar, Tengyu Ma, and Percy~S Liang.
\newblock Data selection for language models via importance resampling.
\newblock \emph{Advances in Neural Information Processing Systems}, 36:\penalty0 34201--34227, 2023{\natexlab{b}}.

\bibitem[Ye et~al.(2024)Ye, Pang, Chai, Chen, Yin, Xiang, Dong, Shao, and Chen]{yerui1}
Rui Ye, Xianghe Pang, Jingyi Chai, Jiaao Chen, Zhenfei Yin, Zhen Xiang, Xiaowen Dong, Jing Shao, and Siheng Chen.
\newblock Are we there yet? revealing the risks of utilizing large language models in scholarly peer review.
\newblock \emph{arXiv preprint arXiv:2412.01708}, 2024.

\bibitem[Yi et~al.(2024)Yi, Ye, Chen, Zhu, Chen, Lian, Sun, Xie, and Wu]{yerui3}
Jingwei Yi, Rui Ye, Qisi Chen, Bin Zhu, Siheng Chen, Defu Lian, Guangzhong Sun, Xing Xie, and Fangzhao Wu.
\newblock On the vulnerability of safety alignment in open-access llms.
\newblock In \emph{Findings of the Association for Computational Linguistics ACL 2024}, pp.\  9236--9260, 2024.

\bibitem[Zbontar et~al.(2021)Zbontar, Jing, Misra, LeCun, and Deny]{dc1}
Jure Zbontar, Li~Jing, Ishan Misra, Yann LeCun, and St{\'e}phane Deny.
\newblock Barlow twins: Self-supervised learning via redundancy reduction.
\newblock In \emph{International conference on machine learning}, pp.\  12310--12320. PMLR, 2021.

\bibitem[Zellers et~al.(2019)Zellers, Holtzman, Bisk, Farhadi, and Choi]{zellers2019hellaswag}
Rowan Zellers, Ari Holtzman, Yonatan Bisk, Ali Farhadi, and Yejin Choi.
\newblock Hellaswag: Can a machine really finish your sentence?
\newblock \emph{arXiv preprint arXiv:1905.07830}, 2019.

\bibitem[Zhang et~al.(2024{\natexlab{a}})Zhang, Zeng, Wang, and Lu]{tinyllama}
Peiyuan Zhang, Guangtao Zeng, Tianduo Wang, and Wei Lu.
\newblock Tinyllama: An open-source small language model.
\newblock \emph{arXiv preprint arXiv:2401.02385}, 2024{\natexlab{a}}.

\bibitem[Zhang et~al.(2023)Zhang, Deng, Jian, Chen, Hu, and Yang]{qixing3}
Qixin Zhang, Zengde Deng, Xiangru Jian, Zaiyi Chen, Haoyuan Hu, and Yu~Yang.
\newblock Communication-efficient decentralized online continuous dr-submodular maximization.
\newblock In \emph{Proceedings of the 32nd ACM International Conference on Information and Knowledge Management}, pp.\  3330--3339, 2023.

\bibitem[Zhang et~al.(2024{\natexlab{b}})Zhang, Wan, Deng, Chen, Sun, Zhang, and Yang]{qixing2}
Qixin Zhang, Zongqi Wan, Zengde Deng, Zaiyi Chen, Xiaoming Sun, Jialin Zhang, and Yu~Yang.
\newblock Boosting gradient ascent for continuous dr-submodular maximization.
\newblock \emph{arXiv preprint arXiv:2401.08330}, 2024{\natexlab{b}}.

\bibitem[Zhang et~al.(2025)Zhang, Wan, Yang, Shen, and Tao]{qixing1}
Qixin Zhang, Zongqi Wan, Yu~Yang, Li~Shen, and Dacheng Tao.
\newblock Near-optimal online learning for multi-agent submodular coordination: Tight approximation and communication efficiency.
\newblock \emph{arXiv preprint arXiv:2502.05028}, 2025.

\bibitem[Zhang et~al.(2022)Zhang, Roller, Goyal, Artetxe, Chen, Chen, Dewan, Diab, Li, Lin, et~al.]{opt}
Susan Zhang, Stephen Roller, Naman Goyal, Mikel Artetxe, Moya Chen, Shuohui Chen, Christopher Dewan, Mona Diab, Xian Li, Xi~Victoria Lin, et~al.
\newblock Opt: Open pre-trained transformer language models.
\newblock \emph{arXiv preprint arXiv:2205.01068}, 2022.

\bibitem[Zhou \& Bilmes(2018)Zhou and Bilmes]{curri}
Tianyi Zhou and Jeff Bilmes.
\newblock Minimax curriculum learning: Machine teaching with desirable difficulties and scheduled diversity.
\newblock In \emph{International conference on learning representations}, 2018.

\end{thebibliography}
\bibliographystyle{iclr2025_conference}

\newpage
\appendix
\section{Appendix}
In the Appendix, we provide additional information including dataset, baselines, model and training details, and the proof of Lemma \ref{lema:frobenius}. As shown in the following, to help readers read, we provide a table of contents of the full paper. 
\tableofcontents
\subsection{Related Work}\label{sec:related}
\textbf{Efficient File Selection}~~~~Pre-trained Large Language Models (LLMs) have exhibited exceptional performance across a wide range of downstream tasks~\citep{jiangshuyang,yerui1,yerui2,yerui3,wang1,fan1,tang}. However, their training process is computationally intensive, with costs escalating significantly as both model size and training data volume increase~\citep{cost1,carbon,cost2}. 
To maximize the performance of LLMs within constrained budgets, it is crucial to curate high-quality pre-training data from extensive text corpora, as this enhances both training efficiency and sample effectiveness.
Recent advancements in pre-training file selection for LLMs primarily rely on leveraging existing or trained proxy models, coupled with proxy functions designed to assess the similarity of data to a target domain, which is typically considered high-quality. For instance, heuristic classification methods~\citep{gpt3,palm} employ binary classifiers to filter content resembling text from domains such as Books and Wikipedia~\citep{redpajama}. Similarly, DSIR~\citep{dsir} focuses on these domains by utilizing a hashed n-gram extractor to quantify similarity. Another approach, QuRating~\citep{qrating}, utilizes GPT-3.5-turbo to train a judge model that evaluates the quality of domains like writing and education. 
Despite these innovations, methods anchored to specific domains face a diversity dilemma, akin to the phenomenon of \textit{dimensional collapse} observed in representation learning~\citep{dc,dc1,dc2,cwd,fanfederated}. These approaches often result in feature vectors that span a lower-dimensional subspace, reflecting reduced diversity. While this may enhance performance in domain-specific tasks such as reading comprehension, it leads to significant performance degradation across broader, more diverse domains, particularly in tasks requiring physical world knowledge, such as PIQA~\citep{bisk2020piqa} and HellaSwag~\citep{zellers2019hellaswag}.
To tackle this issue and promote greater diversity, we introduce a novel diversified file selection algorithm, termed DiSF, designed to curate text samples that span a more uniform distribution of feature dimensions within the embedding space.

\textbf{Submodular optimization.}~~~~Submodularity is a property of functions defined on a set $\Omega$ that exhibit diminishing returns. 
Submodular functions like facility location, log determinant, and graph cut~\citep{location1,function,location2,gc,ld} are widely recognized for effectively modeling diversity~\citep{submodular,active1,fedsub,diversity1}. 
Moreover, submodular optimization has achieved significant success in various fields, such as summarization~\citep{sum1,sum2}, curriculum learning~\citep{fedsub,curri}, active learning~\citep{active1,active2}, and subset selection~\citep{subset-r1,subset-r2}, where selecting diverse subsets is crucial. 
Given a set $\Omega$, a set function $\mathrm{f}: 2^N \rightarrow \mathbb{R}$ is $\gamma$-weakly submodular if and only if, for any subsets $A \subseteq B \subseteq \Omega$, and a element $\mathrm{x}\in \Omega\setminus B$, the following inequality holds:
\begin{equation}\label{eq:ratio}
\mathrm{f}(A \cup\{\mathrm{x}\})-\mathrm{f}(A) \geq \gamma(\mathrm{f}(B \cup\{\mathrm{x}\})-\mathrm{f}(B)),
\end{equation}
where $\gamma\in$ (0,1]. 
When $\gamma=1$~\citep{pessi,ratio}, the function is submodular function.
A non-negative monotone $\gamma$-weakly submodular maximization problem can be solved using the classical greedy algorithm, which guarantees a $(1-e^{-\gamma})$-approximation.
In this work, our function aims to evaluate diversity, making it well suited to be verified under this formulation.

\subsection{Dataset:SlimPajama}\label{app:dataset}
SlimPajama is a high-quality text corpus specifically created for pre-training large language models. This corpus, derived from RedPajama \citep{redpajama}, underwent additional cleaning and deduplication processes. The original RedPajama corpus, an open-source research project, was designed to replicate the pretraining data of Llama \citep{llama} and contains over 1.2 trillion tokens. After extensive filtering to remove low-quality and duplicate content, SlimPajama retains only 50\% of the original tokens from RedPajama.
Notably, compared to domains like ArXiv, GitHub, and StackExchange, Heuristic classification and DSIR tend to select a higher proportion of files from Books and Wikipedia, and Qurating-W also favors the Books domain. 
This bias toward specific domains may explain why these methods encounter dimensional collapse and performance degradation in overall task performance.

\begin{wraptable}{rt!}{0.5\textwidth}
\vspace{-15pt}
\setlength{\tabcolsep}{2pt}
\centering
\caption{Commonsense performance of INGENIOUS when using Warmed-up model (INGENIOUS-W) and Contriever (INGENIOUS-C) with 50B training budget.
}
\vspace{4pt}
\small
\centering
\begin{tabular}{l|cc}
\toprule[2pt]
Method & TinyLlama-120M & TinyLlama-1.1B \\
\midrule
INGENIOUS-W & 39.0&39.6\\
INGENIOUS-C& 43.2&44.1\\
\bottomrule[2pt]
\end{tabular}
\label{apptab:iggenious}
\vspace{-10pt}
\end{wraptable}
\subsubsection{Evaluation}\label{app:eval}
\textbf{Harness evaluation framework.}~~~~To demonstrate the diversity dilemma and evaluate the general performance of the pre-trained LLMs, we utilize seven commonsense reasoning tasks from the widely recognized evaluation framework, Harness \citep{harness}, including four reading comprehension tasks:  
\textbf{OBQA} \citep{OpenBookQA2018}: Inspired by open book exams, this task tests the ability to comprehend and apply knowledge similarly to human understanding; 
\textbf{ARC-e and ARC-c} \citep{allenai:arc}: 7,787 multiple-choice science questions at a grade-school level, divided into an easy set (ARC-e) and a challenge set (ARC-c); 
\textbf{BoolQ} \citep{clark2019boolq}: A reading comprehension task that focuses on naturally occurring yes/no questions,
and three physical world knowledge tasks:
\textbf{HellaSwag} \citep{zellers2019hellaswag}: A collection to assess physically situated commonsense reasoning capabilities; 
\textbf{WinoGrande} \citep{sakaguchi2021winogrande}: An expansion of the Winograd Schema Challenge (WSC) with increased scale and complexity; 
\textbf{PIQA} \citep{bisk2020piqa}: A task to measure the understanding and reasoning about physical interactions in the real world. 
To further verify our effectiveness, we also employ two problem-solving tasks from Harness:
\textbf{MMLU} \citep{MMLU}: A task to measure the world knowledge and problem-solving capabilities across various subjects; 
\textbf{BBH} \citep{bbh}: A subset of 23 challenging tasks from the BIG-Bench benchmark~\citep{bigbench} to measure the ability of complex instruction following.

\begin{table}[ht!]
\setlength{\tabcolsep}{4pt}
\centering
\caption{Commonsense performance when pre-trained on SlimPajama with Open-Llama 3B, 1.5\% selection ratio and 10B training budget.
}
\vspace{4pt}
\small
\centering
\begin{tabular}{l|cccccc}
\toprule[2pt]
Method & Random	& DSIR&	QuRating-W&	QuRating-A&	DiSF (Ours) \\
\midrule
Commonsense ability& 43.1& 41.9& 42.6&43.6 &44.4\\
  
\bottomrule[2pt]
\end{tabular}
\label{apptab:3b}
\vspace{-10pt}
\end{table}

\begin{table}[ht!]
\setlength{\tabcolsep}{4pt}
\centering
\caption{Commonsense performance when pre-trained on both StarcoderData, and SlimPajama with TinyLlama 1.1B, 1.5\% selection ratio and 10B training budget.
}
\vspace{4pt}
\small
\centering
\begin{tabular}{l|ccc}
\toprule[2pt]
Method & Random	& DSIR&DiSF (Ours) \\
\midrule
Commonsense ability& 40.3&40.0& \textbf{41.4}\\
bbh & 12.6 & 12.5 & \textbf{13.3}\\
mmlu & 23.0 & 22.1 & \textbf{23.7}\\
code\_x\_glue & 0.80 & 0.59 & \textbf{0.86}\\
\bottomrule[2pt]
\end{tabular}
\label{apptab:slimstar}
\vspace{-10pt}
\end{table}

\begin{table}[ht!]
\setlength{\tabcolsep}{4pt}
\centering
\caption{Comparison on sample efficiency pre-trained after 50B tokens. We record selected tokens to achieve the performance of Full Data pre-trained with 50B tokens. The ablation interval is 0.5\%.
}
\vspace{4pt}
\small
\centering
\begin{tabular}{l|cccccc}
\toprule[2pt]
Setting & Full-Data	& DSIR&	QuRating-W&	QuRating-A&	D4&	DiSF (Ours) \\
\midrule
TinyLlama-120M& 1.00&0.60& 0.40&0.15& 0.20&0.15\\
TinyLlama-560M& 1.00&0.80& 0.45&0.20& 0.25&0.20\\
TinyLlama-1.1B& 1.00&0.80& 0.40&0.25& 0.25&0.20\\
\bottomrule[2pt]
\end{tabular}
\label{apptab:sample}
\vspace{-10pt}
\end{table}

\begin{table}[ht!]
\setlength{\tabcolsep}{4pt}
\centering
\caption{Comparison on training efficiency pre-trained with 1.5\% selection ratio. We record pre-trained tokens to achieve the performance of Full Data pre-trained with 50B tokens. - denotes the method can not reach that performance. The ablation interval is 1B.
}
\vspace{4pt}
\small
\centering
\begin{tabular}{l|cccccc}
\toprule[2pt]
Setting & Full-Data	& DSIR&	QuRating-W&	QuRating-A&	D4&	DiSF (Ours) \\
\midrule
TinyLlama-120M& 50B& - & - &32B& 36B&27B\\
TinyLlama-560M& 50B& - & - &46B& 47B&36B\\
TinyLlama-1.1B& 50B& - & - &52B& 56B&40B\\
\bottomrule[2pt]
\end{tabular}
\label{apptab:training}
\vspace{-10pt}
\end{table}

\subsection{Baselines}\label{app:baseline}
\subsubsection{DSIR.} DSIR~\citep{dsir} treats Books and Wikipedia as high-quality targets for file selection, employing a hashed n-gram feature extractor to measure the similarity between the text features and the target distribution. 
In our experiments, following the selection procedures outlined in DSIR, we calculate importance scores using raw data (SlimPajama) and target data (Wikipedia and Books) in an n-gram feature space. The importance weights are then applied to resample a subset of the raw dataset. As for files in Wikipedia and Books domains, we proportionally integrate into the selected dataset.

\subsubsection{QuRating.} QuRating~\citep{qrating} queries GPT-3.5-turbo to train a judge model, that assess the specific quality of text samples, including four criteria: writing style, required expertise, facts \& trivia, and educational value. In this paper, for Qurating-W, we select samples with the highest scores for writing style, while for QuRating-A, we proportionally select top-scoring samples across all four criteria.

\subsubsection{Doremi.} Doremi~\citep{doremi}, a recently proposed method that produces domain weights for pre-training on multiple text domains. 
In our experiments, we follow the domain weights calculation process of Doremi, using the domain weights from the initial data distribution as an initial reference to train a small reference model on the SlimPajama dataset. 
We then leverage this reference model to guide the training of a small proxy model, designed to generate domain weights. 
Finally, these domain weights are employed as the random selection ratio for text domains to construct selected dataset.
\subsubsection{INGENIOUS}\label{app:INGENIOUS}
INGENIOUS~\citep{ingenious} extracts features using a model after a warm-up phase and employs Facility Location~\citep{facility} to design a proxy function for feature importance, which measures the similarity between samples in the embedding space. 
Notably, as shown in Table~\ref{apptab:iggenious}, we observed that INGENIOUS, when using a warmed-up model for feature extraction (INGENIOUS-W), does not achieve satisfactory performance with our selected feature extractor, Contriever (INGENIOUS-C). 
Although INGENIOUS-C performs competitively compared to Random, DSIR, QuRating-W, and QuRating-A, our method consistently achieves the best performance across all settings.

\subsubsection{D4.} The recent method D4~\citep{d4} notices the importance of diversified selection, involving SemDeDup~\citep{semdedup} and Prototypicality~\citep{proto} to reduce file redundancy, but can not achieve satisfactory uniform representations as ours. 
In this paper, we sequentially applied the SemDeDup and Prototypicality methods to filter the data, controlling the filtering ratios of these two steps to be $R_{\text{dedup}}=0.75$ and $R_{\text{proto}}=0.02$, respectively.

\begin{table}[ht!]
\renewcommand{\arraystretch}{1.1}
\centering
  \caption{Model structure and training details of pre-training TinyLlama 120M.}
  \vspace{0.1cm}
  \label{tab:param120m}
  \begin{tabular}{ll}
    \hline
    Parameter name & Value \\
    \hline
    Parameter number& 121,129,728\\
    Hidden size& 768 \\
    Intermediate Hidden Size & 2048 \\
    Context Len & 2048 \\
    Heads & 12 \\
    Layers & 12 \\
    Vocab size& 32000 \\
    Minimum learning rate& 4e-5 \\
    Maximum learning rate& 4e-4 \\
    Optimizer& AdamW \\
    $\beta_1$ of optimizer& 0.9 \\
    $\beta_2$ of optimizer& 0.95 \\
    Warmup steps& 2000 \\
    Batch size& 2M tokens \\
    Weight decay& 0.1 \\
    Activation function& SwiGLU \\
    Gradient clipping threshold& 1.0 \\
    Platform& 8 NVIDIA GeForce RTX 4090 GPUs \\
    Training times on 10B tokens& about 0.2 days \\
    Training times on 50B tokens& about 1 days \\
    \hline
  \end{tabular}
\end{table}

\begin{table}[ht!]
\renewcommand{\arraystretch}{1.1}
\centering
  \caption{Model structure and training details of pre-training TinyLlama 560M.}
  \vspace{0.1cm}
  \label{tab:param560m}
  \begin{tabular}{ll}
    \hline
    Parameter name & Value \\
    \hline
    Parameter number& 561,072,128\\
    Hidden size& 2048 \\
    Intermediate Hidden Size & 2048 \\
    Context Len & 2048 \\
    Heads & 16 \\
    Layers & 20 \\
    Vocab size& 32000 \\
    Minimum learning rate& 4e-5 \\
    Maximum learning rate& 4e-4 \\
    Optimizer& AdamW \\
    $\beta_1$ of optimizer& 0.9 \\
    $\beta_2$ of optimizer& 0.95 \\
    Warmup steps& 2000 \\
    Batch size& 2M tokens \\
    Weight decay& 0.1 \\
    Activation function& SwiGLU \\
    Gradient clipping threshold& 1.0 \\
    Platform& 8 NVIDIA GeForce RTX 4090 GPUs \\
    Training times on 10B tokens& about 0.9 days \\
    Training times on 50B tokens& about 4.5 days \\
    \hline
  \end{tabular}
\end{table}

\begin{table}[ht!]
\renewcommand{\arraystretch}{1.1}
\centering
  \caption{Model structure and training details of pre-training TinyLlama 1.1B.}
  \vspace{0.1cm}
  \label{tab:param1.1B}
  \begin{tabular}{ll}
    \hline
    Parameter name & Value \\
    \hline
    Parameter number& 1,100,048,384\\
    Hidden size& 2048 \\
    Intermediate Hidden Size & 5632 \\
    Context Len & 2048 \\
    Heads & 32 \\
    Layers & 22 \\
    Vocab size& 32000 \\
    Minimum learning rate& 4e-5 \\
    Maximum learning rate& 4e-4 \\
    Optimizer& AdamW \\
    $\beta_1$ of optimizer& 0.9 \\
    $\beta_2$ of optimizer& 0.95 \\
    Warmup steps& 2000 \\
    Batch size& 2M tokens \\
    Weight decay& 0.1 \\
    Activation function& SwiGLU \\
    Gradient clipping threshold& 1.0 \\
    Platform& 8 NVIDIA GeForce RTX 4090 GPUs \\
    Training times on 10B tokens& about 1.7 days \\
    Training times on 50B tokens& about 8.5 days \\
    \hline
  \end{tabular}
\end{table}

\subsection{Model and training details}\label{app:model}
For better clarity and reproducibility, we provide the model structures and training details for pretraining TinyLlama with 120M, 560M, and 1.1B parameters, as shown in Tables~\ref{tab:param120m}, \ref{tab:param560m}, and \ref{tab:param1.1B}, respectively. These tables detail the configurations used in our experiments, facilitating easier replication of our results. Please note that our goal in this paper is not to optimize all hyper-parameters for the best LLM, but rather to compare selection methods under fair and reasonable conditions.

\subsection{More analysis}
\subsubsection{Performance on larger dataset and model scale} \label{app:larger}
To evaluate the effectiveness of our selection method on larger models and datasets, we 1) pre-train Open-Llama 3B~\citep{openllama} on SlimPajama, and 2) TinyLlama-1.1B on both SlimPajama and StarcoderData~\citep{starcoder}. StarcoderData is a dataset created for code generation, containing approximately 210M files (500GB of data). We assess the pre-trained models on common sense ability (7 tasks) and an additional code task, code\_x\_glue using harness evaluation. As shown in Table~\ref{apptab:3b}, we pre-train open-llama 3B with 10B training budget and 1.5\% selection budget with SlimPajama. We compare our method with Random, DSIR, QuRating-A, and QuRating-W. DSIR and QuRating-W meet performance degradation due to dimensional collapse, while our method achieves the best performance on all tasks. 
As shown in Table~\ref{apptab:slimstar}, we pre-train TinyLlama-1.1B on both SlimPajama and StarcoderData with 1.5\% selection budget and 10B training budget, compared to random selection and DSIR. Results of DSIR show performance degradation in code ability. We analyze that, DSIR tends to ignore code files in StarcoderData due to the selection critetion is based on WikiPedia, which means dimensional collapse happens. 

\subsubsection{Training efficiency compared to all baselines} \label{app:efficiency}
To compare the computational and data efficiency of our approach with all baselines, we present additional results under the same settings as Figures~\ref{fig:sample} and ~\ref{fig:training}, shown in Tables~\ref{apptab:sample} and \ref{apptab:training}. The results demonstrate that, to achieve equivalent performance to Full Data with a 50B training budget, our method requires the fewest samples and pre-training tokens compared to the baselines, showing promising efficiency.
Additionally, We also compare the cost of our method with all baselines in the process of selecting data. As reported in QuRating, annotating the data using the GPT API costs 520 NVIDIA H100 hours with additional ranking procedures of 3 hours. For DSIR, it takes more than 2 days using 48 CPUs in our platform. For DOREMI, training a 120M proxy model to provide weights for domains takes us approximately one week. In contrast, our method utilizes a public feature extractor and selects samples in about 26 hours using one GPU and 48 CPUs. Combining these facts and our complexity analysis, we believe our method is practical among these methods for larger datasets.

\subsubsection{Proof of lower bound on Submodular ratio}\label{app:subanalysis} We first recall our function, which is based on F-norm of the covariance matrix. The gain on A, and B with another sample e, can be formulated as following:
$$\Delta_A = e^{-\frac{1}{|A| + 1} \sqrt{\sum_{i=1}^d \sum_{j=1}^d \left( \sum_{x \in A} x_i x_j + e_i e_j \right)^2}} - e^{-\frac{1}{|A|} \sqrt{\sum_{i=1}^d \sum_{j=1}^d \left( \sum_{x \in A} x_i x_j \right)^2}},$$
$$\Delta_B = e^{-\frac{1}{|B| + 1} \sqrt{\sum_{i=1}^d \sum_{j=1}^d \left( \sum_{x \in B} x_i x_j + e_i e_j \right)^2}} - e^{-\frac{1}{|B|} \sqrt{\sum_{i=1}^d \sum_{j=1}^d \left( \sum_{x \in B} x_i x_j \right)^2}},$$
where $A\subset B \subset \Omega$, $e\in \Omega$, and $e \notin B$. We reformulate them as
$$\Delta_A = e^{-\frac{1}{|A|} \sqrt{\sum_{i}^d \sum_{j}^d (\sum_{x\in A}x_i x_j)^2}} (e^{\frac{1}{|A|} \sqrt{\sum_{i}^d \sum_{j}^d (\sum_{x\in A}x_i x_j)^2}-\frac{1}{|A+1|} \sqrt{\sum_{i}^d \sum_{j}^d (\sum_{x\in A}x_i x_j+e_ie_j)^2}} - 1),$$
$$\Delta_B = e^{-\frac{1}{|B|} \sqrt{\sum_{i}^d \sum_{j}^d (\sum_{x\in B}x_i x_j)^2}} (e^{\frac{1}{|B|} \sqrt{\sum_{i}^d \sum_{j}^d (\sum_{x\in B}x_i x_j)^2}-\frac{1}{|B+1|} \sqrt{\sum_{i}^d \sum_{j}^d (\sum_{x\in B}x_i x_j+e_ie_j)^2}} - 1).$$

We then define 
$$\Delta(e|A)=\frac{1}{|A+1|} \sqrt{\sum_{i}^d \sum_{j}^d (\sum_{x\in A}x_i x_j+e_ie_j)^2}-\frac{1}{|A|} \sqrt{\sum_{i}^d \sum_{j}^d (\sum_{x\in A}x_i x_j)^2},$$
$$\Delta(e|B)=\frac{1}{|B+1|} \sqrt{\sum_{i}^d \sum_{j}^d (\sum_{x\in B}x_i x_j+e_ie_j)^2-\frac{1}{|B|} \sqrt{\sum_{i}^d \sum_{j}^d (\sum_{x\in B}x_i x_j)^2}}.$$
Therefore, the original equation will be:
$$\frac{\Delta_A}{\Delta_B}=e^{\frac{1}{|B|} \sqrt{\sum_{i}^d \sum_{j}^d (\sum_{x\in B}x_i x_j)^2}-\frac{1}{|A|} \sqrt{\sum_{i}^d \sum_{j}^d (\sum_{x\in A}x_i x_j)^2}} *\frac{e^{-\Delta(e|A)}-1}{e^{-\Delta(e|B)}-1}.$$

To provide a theoretical guarantee, we aim to establish a lower bound for the submodular ratio under any given set where the gain is positive, as Figure~\ref{fig:monotone} generally suggest. Since the original formulation is challenging to analyze directly, we introduce two bounds on the function applied to the text files to derive a more concrete lower bound, as follows:

\begin{assumption}[Bounded Gain Difference] 
Assume the gain difference between any two sets are bounded:
\begin{equation}
   |\Delta(e|A)-\Delta(e|B)|\leq \epsilon.  \nonumber
\end{equation}
\label{asm:difference}
\end{assumption}

\begin{assumption}[Bounded Average Utility] 
Assume average utility is $\mu$, which means $\forall U \in \Omega$, we have:
\begin{equation}
    \frac{1}{|U|} \sqrt{\sum_{i}^d \sum_{j}^d (\sum_{x\in U}x_i x_j)^2}\leq \mu.  \nonumber
\end{equation}
\label{asm:max}
\end{assumption}

With Assumption~\ref{asm:difference}, we have:
$$\frac{\Delta_A}{\Delta_B}\geq e^{\frac{1}{|B|} \sqrt{\sum_{i}^d \sum_{j}^d (\sum_{x\in B}x_i x_j)^2}-\frac{1}{|A|} \sqrt{\sum_{i}^d \sum_{j}^d (\sum_{x\in A}x_i x_j)^2}} *\frac{e^{-\Delta(e|B)-\epsilon}-1}{e^{-\Delta(e|B)}-1}.$$

With Assumption~\ref{asm:max}, we have:
$$\frac{1}{|B|} \sqrt{\sum_{i}^d \sum_{j}^d (\sum_{x\in B}x_i x_j)^2}-\frac{1}{|A|} \sqrt{\sum_{i}^d \sum_{j}^d (\sum_{x\in A}x_i x_j)^2}\geq -2\mu,$$ 
which means $\frac{\Delta_A}{\Delta_B}\geq e^{-2\mu} *\frac{e^{-\Delta(e|B)-\epsilon}-1}{e^{-\Delta(e|B)}-1}.$We reformulate this as
$$\frac{\Delta_A}{\Delta_B}\geq e^{-2\mu} *(e^{-\epsilon}+\frac{e^{-\epsilon}-1}{e^{-\Delta(e|B)}-1}).$$
From Assumption~\ref{asm:max}, we have $\frac{1}{|B|} \sqrt{\sum_{i}^d \sum_{j}^d (\sum_{x\in B}x_i x_j)^2}\leq \mu$. Therefore $e^{-\Delta(e|B)}\leq e^{2\mu}$, and we have:
$$\frac{\Delta_A}{\Delta_B}\geq e^{-2\mu} *(e^{-\epsilon}+\frac{e^{-\epsilon}-1}{e^{2\mu}-1}).$$
Finally, we have a lower bound as $e^{-2\mu}\frac{e^{2\mu-\epsilon}-1}{e^{2\mu}-1}$, which means given assumptions~\ref{asm:difference}, ~\ref{asm:max}, and positive gain, our function is at least $e^{-2\mu}\frac{e^{2\mu-\epsilon}-1}{e^{2\mu}-1}$-weakly submodular function.

\subsubsection{Complexity analysis} \label{app:compuana}
For a detailed analysis of time complexity, we divide it into two parts: 1) computational complexity shown in Algorithm~\ref{alg:DiSF}. In each batch, we initialize $\mathrm{U}_i$ with a randomly selected sample and remove it from the batch. Then, we iteratively apply $(\lfloor\frac{b|\mathcal{S}|}{|\mathbb{D}|}\rfloor-1)$ times the Argmax command on the batch of data with our proxy fuction. Denote the computational cost of our proxy function with k text samples as $\mathrm{F}_{|\mathrm{U}|=k}(U)=O\mathrm{F}_k)$, the computation cost will be:$$O(1+...+\frac{|\mathbb{D}|}{b})\sum_{k=1}^{\frac{b|\mathcal{S}|}{|\mathbb{D}|}}(b-k)(\mathrm{F}_{k+1}) \leq O(\frac{|\mathbb{D}|^2}{b^2})\frac{b|\mathcal{S}|}{|\mathbb{D}|}b \mathrm{F}_{k+1} =O(|\mathbb{D}|\mathcal{S}\mathrm{F}_{\frac{b|\mathcal{S}|}{|\mathbb{D}|}+1}),$$ where b is the batch scale, $|\mathbb{D}|$ is the total data scale, $|\mathcal{S}|$ is the selection budget. 2) The complexity of proxy function $O(\mathrm{F}_k)$. Given text features $z$ and their feature dimension d, Frobenius norm and $z\cdot z^T$ are both $O(d^2)$. Since our proxy function calculates k times the $z\cdot z^T$, $O(F_k)=kd^2$. Finally, the complexity of our DiSF will be: $$O(DiSF)\leq O(|\mathbb{D}|\mathcal{S}F_{\frac{b|\mathcal{S}|}{|\mathbb{D}|}+1}) =O(|S|^2bd^2)$$All terms in the time complexity are at most quadratic and independent of the overall dataset size, which we consider acceptable for applying to larger datasets. The space complexity largely depends on the stored features of all text files: $O(|\mathbb{D}|d^2)$.

\begin{table}[ht!]
\setlength{\tabcolsep}{4pt}
\centering
\caption{Pre-trained performance compared to DiSF-LD. 
}
\vspace{4pt}
\small
\centering
\begin{tabular}{l|ccccccc}
\toprule[2pt]
 Setting& Random &DSIR&D4&QuRating-W&QuRating-A&DiSF-LD&DiSF (Ours) \\
\midrule
TinyLlama120M&38.9&37.8& 39.7 &38.9&40.0&39.5&40.6\\
\bottomrule[2pt]
\end{tabular}
\label{apptab:LD}
\vspace{-10pt}
\end{table}

\subsubsection{Comparison to submodular functions} \label{app:comparesub}
In this section, we compare our proxy function with two strictly submodular functions: facility location and log-determinant. For the facility location function, we compare with INGENIOUS. Additionally, we introduce a variant defined as $F_{LD}$=LogDet(I+C) (DiSF-LD), where $I$ is the identity matrix and $C$ is covariance matrix. We evaluate this variant alongside our original proxy function on TinyLlama-120M, using a 1.5\% selection ratio and a 50B training budget. As shown in Tables~\ref{tb:main}, and \ref{apptab:LD}, the results demonstrate that both DiSF-LD and INGENIOUS help mitigate dimensional collapse and improve the performance of pre-trained LLMs. However, neither method achieves the same level of performance as our original DiSF. This is due to their inability to directly optimize the uniformity of feature dimensions, leading to a trade-off between strict submodularity and the specific goal of optimizing dimensional uniformity.

\begin{table}[ht!]
\setlength{\tabcolsep}{4pt}
\centering
\caption{File selection of DSIR on Starcoder. We denote selected file ratios of DSIR based on Wikipedia domain under both Starcoderdata and SlimPajama as DSIR-W-SS , on Wikipedia domain under Starcoderdata as DSIR-W-S and Python domain under Starcoderdata as DSIR-P-S. 
}
\vspace{4pt}
\small
\centering
\begin{tabular}{l|ccccccc}
\toprule[2pt]
 Method & go & java & javascript & php& python & ruby &slimpajama \\
\midrule
Original  & 0.59\% &  2.52\%  & 2.45\% & 1.97\%  & 1.61\%  & 0.43\%  & 74.07\%  \\
DSIR-W-SS &  0.00\%&    0.77\% &0.54\% & 0.00\% & 0.5\%  & 0.00\%  & 94.0\%  \\
DSIR-W-S &   0.03\%  &    12.37\% & 17.15\%  & 0.12\% &  7.58\% & 0.00\%  & -  \\
DSIR-P-S& 0.00\%  &  25.04\% & 4.36\%  & 2.20\% & 50.38\%  & 0.00\%  & -  \\
\bottomrule[2pt]
\end{tabular}
\label{apptab:starselect}
\vspace{-10pt}
\end{table}

\subsubsection{Observation of dimensional collapse on Starcoder} \label{app:moredimen}
{As shown in Table~\ref{apptab:starselect}, we present the selection results of DSIR on Starcoderdata, as well as on both Starcoderdata and SlimPajama, across two domains: Wikipedia and Python. The results demonstrate that DSIR, when applied to a single domain, tends to select similar files, indicating dimensional collapse. Notably, during our analysis of the scores output by QuRater, we found that the writing style scores for most files in Starcoderdata are negative. As a result, these files are unlikely to be selected when combined with SlimPajama, further highlighting the issue of dimensional collapse.}

\subsection{Proof of Lemma 1}\label{app:lemma1}

\begin{lemma}
    \label{lema:intra'}
    Assuming a covariance matrix $M\in\mathbf{R}^{d\times d}$ computed from the feature of each sample with the standard normalization, and its eigenvalues $\{\lambda_1, \lambda_2,...,\lambda_d\}$, we will have the following equality that satisfied
    \begin{equation}
    \sum_{i=1}^{d}{(\lambda_i-\frac{1}{d}\sum_{j=1}^{d}{\lambda_j})^2}=||M||^2_F - d. \nonumber
    \end{equation}
\end{lemma}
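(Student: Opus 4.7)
The plan is to exploit the fact that standard normalization forces each coordinate of the feature vectors to have unit variance, so the diagonal entries of the covariance matrix $M$ are all equal to $1$. This pins down the trace of $M$, which in turn pins down the mean of the eigenvalues at $1$, and the identity then falls out of a one-line expansion.

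Concretely, I would proceed in three short steps. First, observe that since each feature dimension has been standardized (mean $0$, variance $1$), $M_{ii}=1$ for every $i$, and therefore $\operatorname{tr}(M)=d$. Since the trace equals the sum of eigenvalues, this gives $\sum_{j=1}^d \lambda_j = d$, so the mean eigenvalue is $\frac{1}{d}\sum_{j=1}^d \lambda_j = 1$. Second, recall that the squared Frobenius norm is invariant under orthogonal conjugation, so diagonalizing $M$ yields
\begin{equation}
\|M\|_F^2 = \operatorname{tr}(M^\top M) = \sum_{i=1}^d \lambda_i^2. \nonumber
\end{equation}
Third, substitute these into the left-hand side:
\begin{equation}
\sum_{i=1}^d \bigl(\lambda_i - 1\bigr)^2 = \sum_{i=1}^d \lambda_i^2 \;-\; 2\sum_{i=1}^d \lambda_i \;+\; d = \|M\|_F^2 - 2d + d = \|M\|_F^2 - d. \nonumber
\end{equation}

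There is essentially no main obstacle here: the whole identity is an algebraic consequence of the two observations $\operatorname{tr}(M)=d$ (from standard normalization) and $\|M\|_F^2=\sum_i \lambda_i^2$ (from $M$ being a symmetric, hence orthogonally diagonalizable, matrix). The only place where one needs to be careful is to make explicit that ``standard normalization'' in the statement refers to per-coordinate standardization of the features, which is precisely what forces $M_{ii}=1$; if instead it only meant centering, the identity would still hold with $\operatorname{tr}(M)$ in place of $d$, but the specific constant $d$ on the right-hand side requires unit diagonal. I would state this assumption explicitly at the start of the proof and then present the three-line computation above.
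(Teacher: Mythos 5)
Your proof is correct and follows essentially the same route as the paper: both reduce the claim to $\operatorname{tr}(M)=d$ (from per-coordinate standardization), use $\|M\|_F^2=\operatorname{tr}(M^2)=\sum_i\lambda_i^2$ via orthogonal diagonalization, and finish with the same one-line expansion of $\sum_i(\lambda_i-1)^2$. Your added remark that the identity would hold with $\operatorname{tr}(M)$ in place of $d$ under mere centering is a small clarification the paper leaves implicit, but the argument is otherwise identical.
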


\begin{proof}

Let \( M \in \mathbb{R}^{d \times d} \) be a covariance matrix computed from features that have been standardized—that is, the data has been centered (zero mean) and scaled to have unit variance along each dimension. This standard normalization implies that the trace of \( M \) equals \( d \):

\[
\operatorname{Tr}(M) = \sum_{i=1}^d \lambda_i = d.
\]

This means the average eigenvalue is:

\[
\bar{\lambda} = \frac{1}{d} \sum_{i=1}^d \lambda_i = 1.
\]
\textbf{Left-Hand Side (LHS) Calculation:}

The left-hand side involves the sum of squared deviations of the eigenvalues from their mean:

\[
\sum_{i=1}^{d} \left( \lambda_i - \frac{1}{d} \sum_{j=1}^{d} \lambda_j \right)^2=\sum_{i=1}^d \left( \lambda_i - \bar{\lambda} \right)^2 = \sum_{i=1}^d (\lambda_i - 1)^2.
\]

Expanding each term on the right, we get the following equation:

\begin{align*}
\sum_{i=1}^d (\lambda_i - 1)^2 &= \sum_{i=1}^d \left( \lambda_i^2 - 2\lambda_i + 1 \right) \\
&= \sum_{i=1}^d \lambda_i^2 - 2 \sum_{i=1}^d \lambda_i + \sum_{i=1}^d 1.
\end{align*}

Simplifying with \( \sum_{i=1}^d \lambda_i = d \) and \( \sum_{i=1}^d 1 = d \), we have:

\[
\sum_{i=1}^d \lambda_i^2 - 2d + d = \sum_{i=1}^d \lambda_i^2 - d.
\]

\textbf{Right-Hand Side (RHS) Calculation:}

First, the Frobenius norm of a matrix \( M \) is defined as

\[
\| M \|_F^2 = \sum_{i=1}^d \sum_{j=1}^d M_{ij}^2 = \operatorname{Tr}(M^\top M).
\]

Since \( M \) is symmetric (\( M = M^\top \)), this simplifies to:

\[
\| M \|_F^2 = \operatorname{Tr}(M^2).
\]
Using the spectral theorem, we have

\[
M = U \Lambda U^\top,
\]

where \( U \) is an orthogonal matrix whose columns are the eigenvectors of \( M \), and \( \Lambda = \operatorname{diag}(\lambda_1, \lambda_2, \dots, \lambda_d) \) contains the eigenvalues. Since \( U^\top U = I \), we have the following equation:

\[
M^2 = (U \Lambda U^\top)(U \Lambda U^\top) = U \Lambda U^\top U \Lambda U^\top = U \Lambda^2 U^\top,
\]

Therefore, the trace of \( M^2 \) is

\[
\operatorname{Tr}(M^2) = \operatorname{Tr}(U \Lambda^2 U^\top) = \operatorname{Tr}(\Lambda^2 U^\top U) = \operatorname{Tr}(\Lambda^2) = \sum_{i=1}^d \lambda_i^2.
\]

Thus, the right-hand side is

\[
\| M \|_F^2 - d = \left( \sum_{i=1}^d \lambda_i^2 \right) - d.
\]

\textbf{Conclusion:}

Comparing both sides:

\[
\sum_{i=1}^{d} \left( \lambda_i - \frac{1}{d} \sum_{j=1}^{d} \lambda_j \right)^2 = \sum_{i=1}^d \lambda_i^2 - d = \| M \|_F^2 - d.
\]

This completes the proof.

\end{proof}

\end{document}